\documentclass{article} 
\usepackage{iclr2027_conference,times}

\usepackage{amsmath,amsfonts,bm}

\def\eqref#1{equation~\ref{#1}}

\def\1{\bm{1}}

\DeclareMathAlphabet{\mathsfit}{\encodingdefault}{\sfdefault}{m}{sl}
\SetMathAlphabet{\mathsfit}{bold}{\encodingdefault}{\sfdefault}{bx}{n}

\usepackage{svg} 
\usepackage{hyperref}
\usepackage{url}
\usepackage{graphicx}
\usepackage[table]{xcolor}
\usepackage{booktabs}
\usepackage{graphicx}
\usepackage{wrapfig}
\usepackage{subcaption}
\usepackage{capt-of} 
\usepackage{algorithm}
\usepackage{algorithmic}
\usepackage{amsthm}
\newtheorem{proposition}{Proposition}

\usepackage{multirow}
\usepackage{array}
\usepackage{caption}
\usepackage{enumitem}

\usepackage{cleveref}

\usepackage{tikz}

\hypersetup{
	colorlinks=true,
	urlcolor=purple,
	citecolor=purple,
}

\newtheorem{lemma}{Lemma}[section]

\newcommand{\meanstd}[2]{%
  \makebox[4.5em][c]{%
    #1%
    \makebox[0pt][l]{%
      \hspace{0.15em}%
      \raisebox{-0.42ex}[0pt][0pt]{%
        \scalebox{0.60}{%
          {\bfseries\boldmath
          \textcolor{gray!70!black}{\(\pm\,#2\)}}%
        }%
      }%
    }%
  }%
}

\title{\textbf{GraphOPD}:
       Graph-Augmented On-Policy Distillation for LLM Agents}

\author{
Bohan~Lin$^{1,2}$\thanks{Work done during an internship at Xiaohongshu Inc.}, \hspace{0.4em}Liyi~Chen$^2$\textsuperscript{$\dagger$}, \hspace{0.4em}Zhuoning~Guo$^3$,\hspace{0.4em} Muyang~Li$^1$, \hspace{0.4em}Qimeng~Wang$^2$, \\[4pt]
\textbf{Yan~Gao$^2$,\hspace{0.4em} Yao~Hu$^2$, \hspace{0.4em}Yudong~Zhang$^1$}
\thanks{Corresponding authors.}
\\[6pt]
$^1$ University of Science and Technology of China,\quad $^2$ Xiaohongshu Inc.\hspace{0.4em},\\
$^3$ The Hong Kong University of Science and Technology (HKUST-GZ)
\\[4pt]
\texttt{linbohan@ustc.edu; liyichencly@gmail.com; yudong.zhang@ustc.edu.cn}
}

\iclrfinalcopy 
\begin{document}

\maketitle

\begin{abstract}
On-policy distillation post-trains large language model agents by
supplying dense, step-level guidance from a teacher policy when the
reinforcement-learning reward is sparse and arrives only once per
trajectory.
Existing instantiations allocate this guidance by the size of the
teacher-student divergence at each step, on the single-turn intuition
that a large disagreement marks a mistake worth correcting.
Once decisions chain over many turns, that rule misfires, since an early
drift enters every later context both policies condition on, leaving the
teacher consistent with the drifted trajectory instead of flagging its
cause, while interchangeable steps register large but outcome-irrelevant
divergences.
We demonstrate this on an agentic benchmark, where distilling the
highest-divergence steps brings no consistent benefit over random
selection.
To this end, we introduce \textbf{GraphOPD}, the first method to bring graph-based
structural augmentation into on-policy distillation for agent
capabilities.
It reads which steps enabled which later ones from the environment's own record of state changes, immune to the drift that corrupts the teacher-student gap, organizes them into a dependency graph, scores each step by a random-walk stationary distribution over it, and fuses that structural credit with the divergence signal into a trajectory-relative mask concentrating supervision on each rollout's highest-aptitude steps.
Across three model scales and eleven baselines on ALFWorld, WebShop, and
SearchQA, GraphOPD shows competitive performance throughout, improving
over the strongest baseline by up to \textbf{+5.8 pp}.
An executed-replay audit further shows that this structural credit score
tracks true causal impact far above chance, that both fused signals are
independently necessary, and that the same signal transfers to
out-of-domain tool-integrated reasoning.The code is at \url{https://github.com/BoHan-LIN04/GraphOPD}.
\end{abstract}

\section{Introduction}
\label{sec:intro}

On-policy distillation has become a standard way to post-train large
language model agents that pursue goals through extended interaction with
an environment~\citep{yao2023react}, spanning household task execution,
multi-step retrieval, and web
purchasing~\citep{shridhar2021alfworld,yao2022webshop,yang2018hotpotqa}.
Reinforcement learning provides the outcome-level training signal for
such agents~\citep{grpo2024}, and because that signal is sparse and
arrives only once per trajectory, on-policy distillation complements it
with dense, step-level supervision from a teacher policy, typically a
frozen snapshot of the student matched via KL divergence at every
step~\citep{opsd2026,opd_survey_2026,rethinking_opd_2026}.
This combination substantially improves sample efficiency on agentic
benchmarks~\citep{tcod2026,adwin2026}, yet allocating that step-level
supervision across a trajectory remains an open and challenging problem
central to the whole recipe.

Existing agentic methods answer it by weighting each step by the
magnitude of its teacher-student divergence, on the single-turn intuition
that a large disagreement marks a mistake worth
correcting~\citep{sod2026,sdar2026,gear2026,stepopsd2026}, while a
parallel line keeps the same magnitude rule but acts on it indirectly,
routing the gap into the RL update or enriching the teacher with
privileged skills or
debate~\citep{rlsd2026,opid2026,skillsd2026,madopd2026}.
A separate family reweights the RL advantage itself with structure read
from the rollout, such as branching or graph-based credit scores, but
leaves the distillation loss
unweighted~\citep{appo2026,gepo2026}.
Across all of these lines the allocation rule remains the single-turn
one, in which the size of the teacher-student gap, alone or combined with
rollout structure on the RL side, decides where supervision goes.

\begin{figure}[t]
  \centering
  \includegraphics[width=\linewidth]{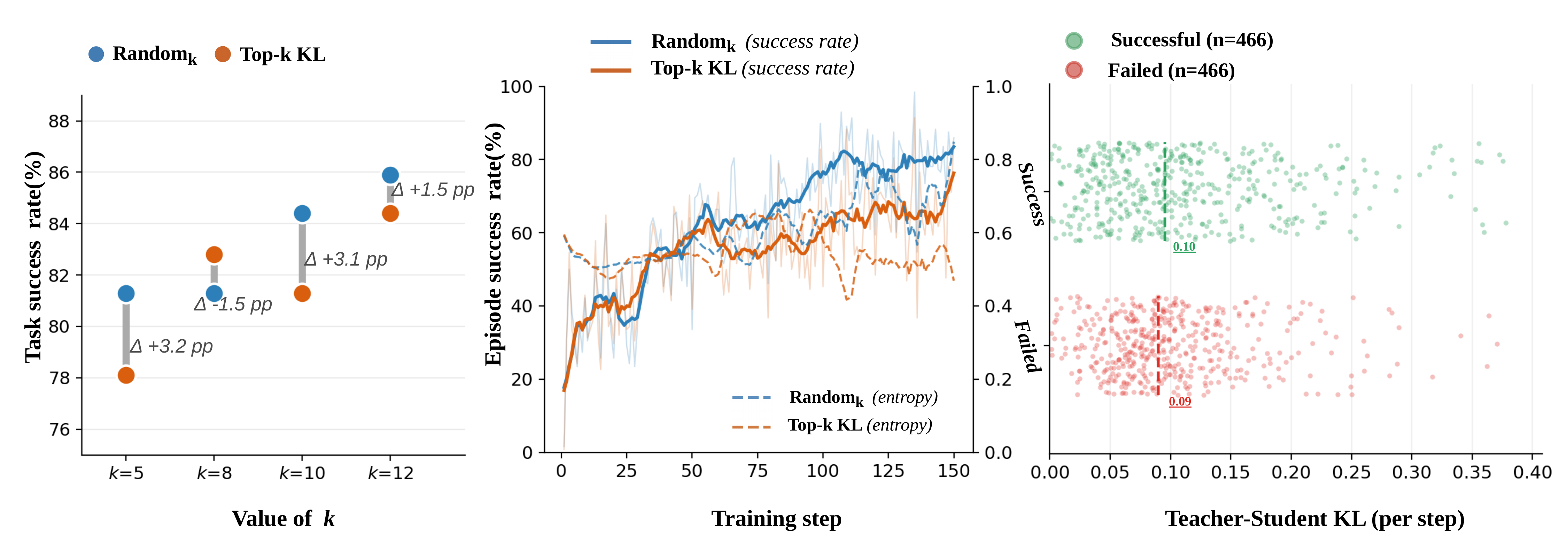}
  \caption{%
    \textbf{Motivating empirical study.}
    All runs distill only the $k$ highest-KL steps or $k$ randomly chosen
    steps per trajectory, $k \in \{5, 8, 10, 12\}$, on ALFWorld with a 7B
    Qwen2.5-Instruct agent under GRPO for 150 steps, all other conditions
    identical.
    (\textbf{Left}) Random selection stays ahead at every budget $k$, by
    $+1.5$ to $+3.2$ pp in  success rate, with the per-budget gaps
    $\Delta$ annotated above each pair.
    (\textbf{Middle}) At $k=5$ the KL-guided run trails in success rate
    (solid curves, left axis) throughout and ends at 76.3\% against 83.5\%
    for random, while its policy entropy (dashed curves, right axis)
    contracts as the random run's expands.
    (\textbf{Right}) The per-step KL of 466 successful and 466 failed steps
    from 50 rollouts overlaps by 84\% between the two outcomes, with
    medians (dashed) at 0.10 and 0.09, so success and failure are barely
    separable by divergence.%
  }
  \label{fig:motivation}
\end{figure}

As shown in Figure~\ref{fig:motivation}, we test that rule where decisions chain over many turns and find that it
fails exactly as the structure of agent trajectories predicts
, since distilling the $k$ highest-KL steps
brings no consistent benefit over distilling $k$ random steps, the
divergence-guided rule trails random selection throughout training while
policy entropy collapses rather than grows, and successful and failed
steps barely separate by divergence.
An early drift enters every later context both policies condition on,
leaving the teacher locally consistent with the drifted trajectory
precisely at the outcome-deciding steps, and interchangeable steps
register large divergences unrelated to the result, so divergence
magnitude measures how far the student has drifted rather than whether
that drift mattered.
Agent trajectories nevertheless record what divergence cannot see, namely
which steps enabled which later ones, readable from environment state
transitions rather than from either policy's predictions, and no prior
agentic on-policy distillation method has used this record to place the
teacher's supervision.

To this end, in this paper we focus on placing the step-level supervision
where the teacher's guidance remains usable along a multi-turn trajectory,
and propose \textbf{GraphOPD}, the first method to introduce graph-based
structural augmentation into on-policy distillation for improving agent
capabilities.
For each rollout, GraphOPD reads which earlier state changes a later action consumes into a directed dependency graph, converts its random-walk stationary distribution into a per-step structural credit score, and fuses that score with the divergence signal into a trajectory-relative mask that spends supervision where the teacher's guidance is usable, leaving the rest to the RL objective. The structural score marks the steps at which much of the trajectory's accumulated dependency structure is resolved, so it stays informative exactly where the teacher-student gap has been blinded by drift or inflated by interchangeable actions. Since both signals come from the rollout itself, the mechanism needs no extra model calls and enters training as a drop-in reweighting of the distillation loss.
We evaluate GraphOPD on ALFWorld, WebShop, and SearchQA across three
model scales against eleven baselines, where it shows competitive
performance throughout and improves over the strongest baseline by up to
\textbf{+5.8 pp}.
Analyses further confirm that both signals are independently necessary,
that the structural credit score tracks true causal impact far above
chance under an executed-replay audit, and that the same signal
transfers to out-of-domain tool-integrated reasoning.

The main contributions of this paper are summarized as follows:
\begin{itemize}
  \item We study the divergence-magnitude rule on multi-turn agent benchmarks and find that the teacher-student gap measures drift rather than outcome relevance, so distilling its highest-divergence steps brings no consistent benefit over random selection, motivating a drift-immune supervision signal for long-horizon agents.

  \item We are the first to bring graph-based structural augmentation into on-policy distillation for agents, reading a directed graph from environment state transitions to capture which steps the outcome turns on.

  \item We design GraphOPD, which fuses a random-walk structural credit
        score with the divergence signal into a distillation aptitude and
        converts it into a trajectory-relative mask that trains alongside
        the reinforcement-learning objective.

  \item Experiments across three benchmarks, three scales, and eleven
        baselines show gains up to +5.8 pp over the strongest baseline,
        with the structural credit score validated by executed-replay
        counterfactuals and out-of-domain transfer.
\end{itemize}

\section{Related Work}
\label{sec:related}

\paragraph{On-policy distillation for LLM agents.}
On-policy distillation augments reinforcement learning with dense
token-level supervision from a teacher evaluated on the student's own
rollout states~\citep{opd_survey_2026,rethinking_opd_2026}, and the methods
we compare against are either instances of it or bounds on what it can add.
Objective-level work stabilizes or generalizes the same recipe via target
reformulation, entropy-aware divergences, teacher-supported truncation, or
reward
extrapolation~\citep{jang2026veto,jin2026entropy,fu2026revisiting,yang2026learning}.
Among the on-policy methods, OPSD~\citep{opsd2026} distills the student's
own frozen snapshot over its rollouts and SDAR~\citep{sdar2026} applies the same recipe to
multi-turn agent rollouts, RLSD~\citep{rlsd2026} routes the teacher-student
gap into the RL objective as advantage shaping or update magnitude, and
OPID~\citep{opid2026} distills on-policy from a teacher augmented with
privileged skill context.
Skill-GRPO and
Skill-SD~\citep{skillsd2026} injecting external skill demonstrations or
conditioning self-distillation on them.
Each on-policy baseline allocates supervision by the magnitude of the
teacher-student divergence, which is the shared premise our empirical study
tests.

\paragraph{Step-level weighting and structural credit assignment.}
Beyond our comparison set, on-policy distillation has been extended along
three axes, controlling when to distill through temporal
curricula~\citep{tcod2026,adwin2026}, how strongly to supervise each step
from the divergence through gating, granularity adaptation, or advantage
routing~\citep{sod2026,gear2026,stepopsd2026}, and how reliable the teacher
is through debate, rollout-tested guidance, or bounded update
magnitudes~\citep{madopd2026,chen2026look,liu2026when,wang2026teach,sun2026step}.
A separate family reweights the RL advantage rather than the distillation
loss using structure read from the rollout: branching and graph-based
credit scores~\citep{appo2026,gepo2026,eglsca2026,masopt2026,mtgcausal2026},
dependency and memory
graphs~\citep{vimrag2026,llmexpl2026,stepcode2026},
verifier- or Shapley-based edge credit~\citep{shapleyflow2020},
milestone and rubric judgments~\citep{zhang2026trca}, and
bisimulation or transport distances to a reference
plan~\citep{wang2026bipace}.
Beyond agent RL, graphs underlie self-correcting knowledge-graph
planning~\citep{pog2024}, multi-modal entity
alignment~\citep{mmea2020,msnea2022}, temporal or de-correlated
recommendation~\citep{tgcl4sr2024,afdgcf2024}, and video social-graph
generation~\citep{wu2021linking}.
None of these represent, for the distillation loss, which steps the later
ones actually depend on, and that is what GraphOPD reads directly from the
trajectory.

\section{Method}
\label{sec:method}

GraphOPD fuses two per-step signals into a single distillation weight.
The first is a structural credit signal that identifies which steps in a
trajectory were critical to its outcome, derived from a lightweight
dependency graph built over the steps' entity-level relations and scored
with a random-walk stationary distribution.
The second is the divergence signal used by prior on-policy distillation
methods, measuring how far the student's behavior has drifted from the
teacher's.
GraphOPD combines the two into a trajectory-relative per-step weight that
spends distillation only where the teacher's guidance is both
trustworthy and useful, leaving the rest to the RL objective.
Figure~\ref{fig:framework} summarizes the full pipeline.

\begin{figure}[t]
  \centering
  \includegraphics[width=\linewidth]{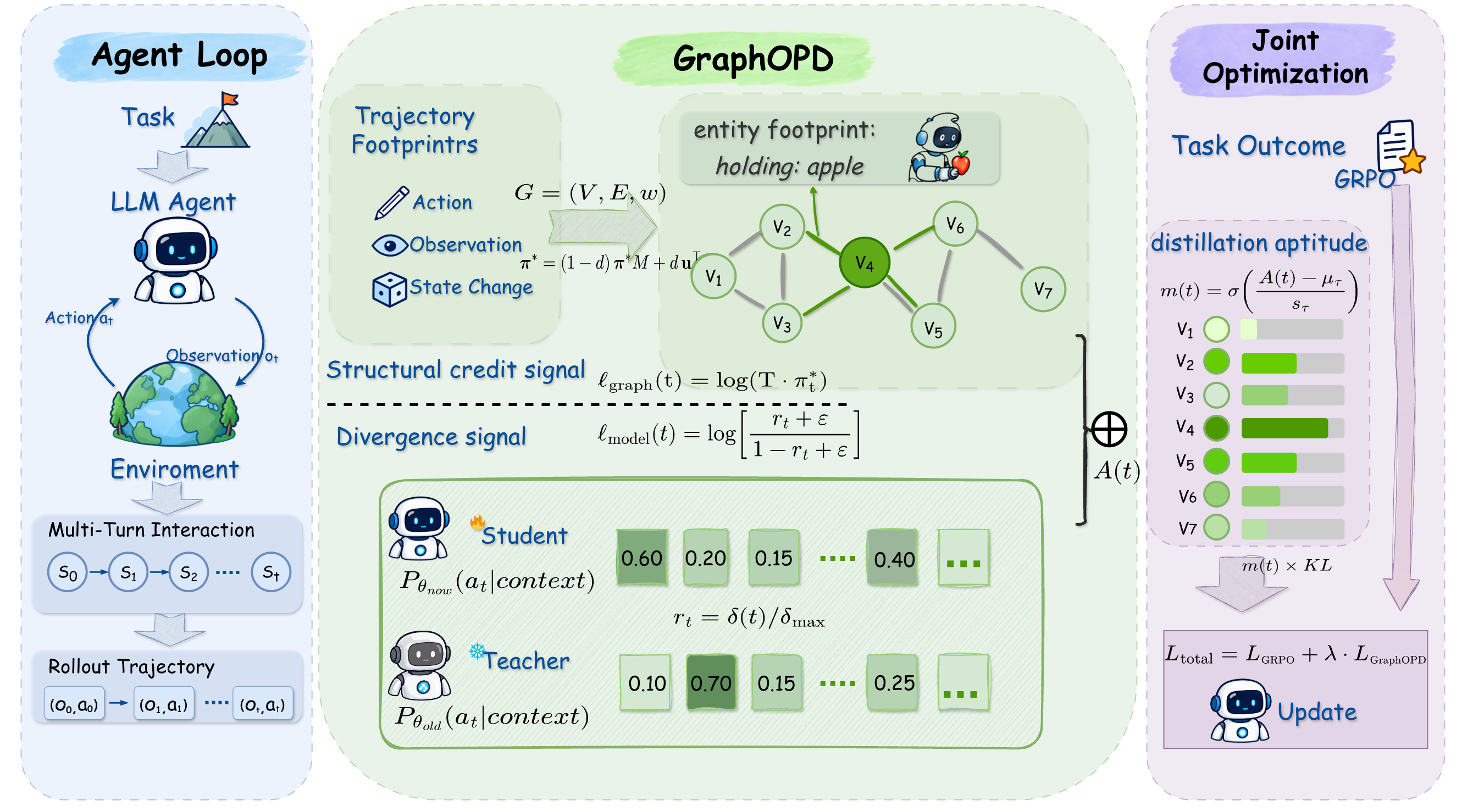}
  \caption{\textbf{GraphOPD overview.}
    From a single rollout, two per-step signals are computed in parallel, a structural credit signal from the Footprint-built dependency graph and its random-walk stationary distribution, and a divergence signal from the teacher-student gap normalized within the trajectory. The two are fused into a distillation aptitude $A(t)$, converted into a trajectory-relative mask $m(t)$, and the resulting masked distillation loss is added to the unweighted GRPO objective; every stage reads only the rollout itself and needs no extra model calls.}
  \label{fig:framework}
\end{figure}

\subsection{Problem Setup}
\label{sec:setup}

We consider an agent that interacts with an environment over $T$ steps and
produces a trajectory $\tau = (s_0, a_1, o_1, \ldots, a_T, o_T)$, where at
each step $t$, against environment state $s_t$, the agent emits action
$a_t$ and the environment returns observation $o_t$; the action and
observation text are what our Footprint extractor reads.
A terminal reward $R$, issued only once at the end of the episode, encodes
task success or failure, and no dense per-step reward is assumed.

The student policy $\pi_\theta$ is trained under GRPO,
augmented by a teacher signal at each step.
We take the teacher to be $\pi_{\theta_\text{old}}$, the student's snapshot
frozen at the start of each rollout batch and already used by GRPO for the
importance ratio.
Staying within one update of the current policy, it serves as an on-support
behavioral anchor rather than an external expert.
\subsection{Structural Credit Assignment}
\label{sec:causal_graph}

\paragraph{Assigning credit without a dense reward.}
With only a terminal reward, a trajectory offers no direct signal
for which of its steps actually mattered.
Agent actions constrain what is possible later, and this directed structure
is readable from action and observation text alone, without LLM calls,
privileged annotations, or rollout counterfactuals.
We use it as a proxy for step-level credit, made explicit through a
lightweight graph $G = (V, E, w)$, with one node $v_t \in V$ per step,
built from a rule-based \emph{State Transition Footprint} $\mathcal{F}(t)$
of entity-level state-change keys
\begin{equation}
\mathcal{F}(t) = \mathcal{F}_{\text{env}}(a_t,\,o_t)
  \;\subseteq\;
  \bigl\{\,\texttt{<type>:<entity>}\,\bigr\}.
\end{equation}
Full extractor specifications are in Appendix~\ref{app:footprint}.

\paragraph{Edge construction.}
A directed edge $v_i \to v_j$ ($i < j$) is added whenever step $i$'s state
change is explicitly referenced by step $j$'s action,
\begin{equation}
(v_i\!\to\!v_j)\in E
\;\iff\;
\exists\;\texttt{<type>:<entity>}\in\mathcal{F}(i)
\;\text{ s.t. }\;
\texttt{<entity>}\in\mathrm{substr}(a_j).
\label{eq:causal_edge}
\end{equation}
Two auxiliary types complete the graph.
\emph{Backtrack edges} of weight $0.5$ connect $j$ back to an earlier $i$
whenever $o_j = o_i$, marking a revisit, and \emph{sequential edges} of
weight $0.05$ connect consecutive steps lacking a dependency or backtrack
edge, preserving connectivity for the centrality computation.

\paragraph{Soft pruning by entity specificity.}
Substring matching can admit spurious dependency edges through
high-frequency entities that co-occur with many steps without carrying a
specific dependency.
Rather than a constant weight, we weight each dependency edge by the rarity
of the entity that induces it.
Let $df(e)$ be the number of steps in the trajectory whose action or
observation mentions entity $e$, and the dependency-edge weight is
\begin{equation}
w(v_i \to v_j) = \frac{1}{df(e)},
\label{eq:soft_prune}
\end{equation}
where $e$ is the matched entity, or the rarest one if several match.
An entity mentioned once or twice yields weight near $1$, and a ubiquitous
entity yields a small weight that is diluted once the stationary
distribution's row normalization is applied.
This soft pruning introduces no hyperparameter and leaves the backtrack and
sequential edges unchanged.

\paragraph{What the dependency graph encodes.}
The edges of $G$ record which earlier state changes a later action actually
consumes.
That is an enabling relation in the environment dynamics and a factual
property of the recorded rollout, so it does not inherit the noise affecting
the KL-based divergence signal in long-horizon settings.
We accordingly treat $\ell_\text{graph}$ as a \emph{structural credit
score}, measuring how much of the trajectory's later behavior runs
through a given step.
Whether that structural quantity tracks the steps the outcome really turns
on is an empirical question, and we settle it with an executed-replay
counterfactual audit in our experiments. Because a step's stationary mass is the rarity-weighted inflow of every
dependency chain its action consumes, intermediate milestones at which
several upstream state changes meet score above the neutral point just as
terminal actions do, and the trajectory-relative mask ranks steps within
their own rollout rather than by position.

\subsection{Structural Credit and Divergence Signals}
\label{sec:signals}

\paragraph{Structural credit signal \texorpdfstring{$\ell_\text{graph}$}{ℓ\_graph}.}
Counting only direct dependency edges misses transitive importance, so we
use a random-walk stationary distribution over the graph, which propagates
influence along edges rather than only counting them.
Let $M \in \mathbb{R}^{T \times T}$ be the row-normalized edge-weight
matrix, in which leaf nodes receive a uniform row $1/T$.
With damping $d = 0.15$ and uniform teleport
$\mathbf{u} = [1/T, \ldots, 1/T]^\top$, the stationary distribution
satisfies
\begin{equation}
\boldsymbol{\pi}^* = (1-d)\,\boldsymbol{\pi}^*M + d\,\mathbf{u}^\top.
\label{eq:stationary}
\end{equation}
Mass flows forward along the dependency edges, peaking at convergence
points where the preconditions assembled earlier are exercised and the
outcome is decided.
We convert it to log-odds relative to the uniform baseline,
\begin{equation}
\ell_\text{graph}(t) = \log(T \cdot \pi^*_t),
\label{eq:lgraph}
\end{equation}
which gives $\ell_\text{graph}(t) > 0$ above average and $0$ at uniform,
matching the neutral point of the divergence signal defined next.

\paragraph{Divergence signal \texorpdfstring{$\ell_\text{model}$}{ℓ\_model}.}
The divergence signal measures how much the student's step-$t$ behavior
differs from the teacher's.
Raw KL is dominated by global training-stage drift, so we use each step's
deviation relative to the trajectory, namely the mean absolute token
log-probability difference over the student's action tokens $S_t$,
\begin{equation}
\delta(t) = \frac{1}{|S_t|}\sum_{k\in S_t}
  \bigl|\log\pi_{\theta_\text{old}}(k\mid s_t)
       -\log\pi_\theta(k\mid s_t)\bigr|.
\label{eq:delta}
\end{equation}
We then convert the relative deviation
$r_t = \delta(t)/\delta_\text{max} \in [0,1]$ to a smoothed
within-trajectory log-odds score
\begin{equation}
\ell_\text{model}(t)
  = \log\!\left[\frac{r_t+\varepsilon}{1-r_t+\varepsilon}\right],
\quad
\delta_\text{max} = \max_{t'\in\tau}\delta(t'),
\label{eq:lmodel}
\end{equation}
with smoothing floor $\varepsilon = 0.1$.
The score is $0$ at mid-deviation, where $r_t=\tfrac12$, matching
$\ell_\text{graph}$'s neutral point, and the floor bounds it to
$\pm\log\frac{1+\varepsilon}{\varepsilon}\approx\pm2.4$, the same order as
$|\ell_\text{graph}|\le\log T$.
The two signals are therefore commensurate and either source can veto the
other, while the within-trajectory normalization keeps them comparable
across training stages.

\subsection{Distillation Aptitude, Masking, and Training Objective}
\label{sec:fusion}

\paragraph{Fusing the two signals into a distillation aptitude.}
Both signals already share a common footing, since each is zero at its own
neutral point and positive above it, and the $\varepsilon$-floor in
Eq.~\eqref{eq:lmodel} keeps $\ell_\text{model}$ on the same order as
$\ell_\text{graph}$'s $\pm\log T$ range established above, so neither can
unilaterally dominate the other.
We combine them by a weighted sum
\begin{equation}
A(t) = \alpha \cdot \ell_\text{model}(t)
      + (1-\alpha) \cdot \ell_\text{graph}(t),
\label{eq:aptitude}
\end{equation}
the \emph{Distillation Aptitude}.
$A(t)$ is high when both sources agree, meaning the student has
drifted \emph{and} the step is central, and it stays attenuated when the
two disagree.
$\alpha$ is selected by grid search on a validation split, with $\alpha=1$
recovering divergence-only weighting and $\alpha=0$ recovering
pure structural-credit weighting.

\paragraph{Trajectory-relative mask and training objective.}
The absolute scale of $A(t)$ is not stable across training, because
$\delta_\text{max}$ and the graph's stationary distribution both shift as
the policy changes, so a fixed threshold on $A(t)$ would need constant
re-tuning.
We instead standardize $A(t)$ within its own trajectory before masking.
With $\mu_\tau$ and $s_\tau$ the mean and standard deviation of
$\{A(t)\}_{t=1}^{T}$ over $\tau$,
\begin{equation}
m(t) = \sigma\!\left(\frac{A(t) - \mu_\tau}{s_\tau}\right),
\label{eq:mask}
\end{equation}
so $m(t) > \tfrac12$ exactly when step $t$'s aptitude exceeds its own
trajectory's average, and this relative threshold stays meaningful at every
stage of training because it is defined against $\tau$ itself rather than
against a fixed scale.
Every trajectory thus spends its own distillation budget on its own most
decisive steps, independent of trajectory length, dependency density, or how
far training has progressed.
Design and numerical details, including the degenerate $s_\tau=0$ case, are
deferred to Appendix~\ref{app:theory}.

The distillation loss aggregates the masked forward KL over each trajectory
step, normalized by the total mask weight,
\begin{equation}
\mathcal{L}_{\text{GraphOPD}}
  = \frac{1}{\sum_t m(t)}
    \sum_{t=1}^{T} m(t) \cdot
    \mathrm{KL}\!\bigl(\pi_{\theta_\text{old}}(\cdot \mid s_t)
                       \,\|\, \pi_\theta(\cdot \mid s_t)\bigr).
\label{eq:graphopd_loss}
\end{equation}
This loss is combined with the GRPO policy gradient objective as
\begin{equation}
\mathcal{L}_{\text{total}} = \mathcal{L}_{\text{GRPO}}
  + \lambda \cdot \mathcal{L}_{\text{GraphOPD}},
\label{eq:total_loss}
\end{equation}
where $\lambda = 0.05$ sets the weight of the distillation term relative
to the RL objective and is held fixed across all benchmarks and scales.
GraphOPD is implemented as a drop-in wrapper around the training loop,
and Algorithm~\ref{alg:graphopd} summarizes the full pipeline.

\section{Experiments}
\label{sec:experiments}

\subsection{Experimental Setup}
\label{sec:exp_setup}

We evaluate on three agentic benchmarks, ALFWorld~\citep{shridhar2021alfworld},
WebShop~\citep{yao2022webshop}, and SearchQA, a suite of seven open-domain
question-answering datasets~\citep{kocisky2018narrativeqa,joshi2017triviaqa,
mallen2023popqa,yang2018hotpotqa,ho2020wikimultihopqa,trivedi2022musique,
press2023bamboogle}, scored by episode success rate, by success rate
together with completion score, and by exact-match accuracy respectively;
the per-benchmark average columns are the headline numbers quoted in the
text.
All methods share the same GRPO~\citep{grpo2024} backbone implemented in
\texttt{veRL}~\citep{sheng2024hybridflow} and are trained at three model
scales, Qwen3-1.7B~\citep{qwen3_2025}, Qwen2.5-3B-Instruct and
Qwen2.5-7B-Instruct~\citep{qwen2024qwen25}, against eleven baselines;
baseline families, prompts, training configurations, and the run-variance
table are detailed in Appendix~\ref{app:setup}.

\subsection{Main Results}
\label{sec:main_results}

GraphOPD turns the dependency structure of a trajectory into a training
signal that existing on-policy distillation methods do not use, and
Table~\ref{tab:main_results} shows this signal translating directly into
consistent gains.
The advantage is most direct on the two benchmarks that require sustained
manipulation of an environment over many turns, where GraphOPD attains the
best average success rate on ALFWorld and the best success rate on WebShop
at every model scale, ahead of all eleven baselines, peaking at 91.9\%
ALFWorld success and 83.6\% WebShop success at 7B.
The margin over GEPO, the closest prior baseline, which reweights the
GRPO advantage with a graph-based credit score rather than adding a
distillation loss, is positive at every scale, reaching +5.2 to +7.9 pp
on ALFWorld and +3.6 to +5.5 pp on WebShop success rate.
GraphOPD also outperforms the skill-augmented baselines that receive
privileged demonstrations or validated skill libraries, such as
Skill-GRPO and GRPO+OPSD, on both benchmarks at every scale, despite
reading its own signal for free from the rollout rather than from
external supervision.
The same structural signal also carries over to SearchQA, a
retrieval-and-reasoning setting with a substantially different action
space, where GraphOPD leads at 3B and 1.7B with 46.1 and 42.9 average
accuracy, ahead of every direct-comparison baseline, and stays
competitive at 7B.
The margin over SDAR is positive on both benchmarks at every scale, at
\textbf{+12.0 pp} on ALFWorld and \textbf{+13.5 pp} on WebShop at 1.7B,
\textbf{+6.8 pp} and \textbf{+9.6 pp} at 3B, and \textbf{+7.5 pp} and
\textbf{+1.6 pp} at 7B.
The run-to-run standard deviations behind these means, given in
Table~\ref{tab:run_std}, confirm that the comparisons sit within
reasonable fluctuation: GraphOPD's own spread never exceeds 1.6 pp and
stays below 0.6 pp in seven of the nine headline cells, well under half
the largest baseline spreads of about 5 pp.

\begin{table}[t]
  \centering
  \caption{%
    \textbf{Main results.}
    Success rate (ALFWorld), accuracy (SearchQA), and success rate (WebShop) across three model scales.\textbf{Bold}  and \underline{underline} marks the best and the second best result per column within each model block.
  }
  \label{tab:main_results}
  \renewcommand{\arraystretch}{0.80}
  \setlength{\tabcolsep}{4pt}
  \resizebox{\textwidth}{!}{%
  \footnotesize
  \begin{tabular}{@{}l ccccccc cccccccc cc@{}}
    \toprule
    & \multicolumn{7}{c}{\textbf{ALFWorld}} & \multicolumn{8}{c}{\textbf{SearchQA}} & \multicolumn{2}{c}{\textbf{WebShop}} \\
    \cmidrule(lr){2-8} \cmidrule(lr){9-16} \cmidrule(lr){17-18}
    \textbf{Method} & Pick & Look & Clean & Heat & Cool & Pick2 & \cellcolor{yellow!15}Avg
      & NarQA & Triv & Pop & Hotp & 2Wk & MuS & Bam & \cellcolor{yellow!15}Avg
      & Score & \cellcolor{blue!10}Succ. \\
    \midrule
    \multicolumn{18}{l}{\textit{Qwen2.5-3B-Instruct}} \\
    \midrule
    Vanilla & 44.4 & 11.1 & 6.2 & 15.4 & 28.6 & 12.5 & \cellcolor{yellow!15}21.9
      & 24.6 & 48.1 & 31.0 & 26.3 & 25.3 & 7.2 & 59.7 & \cellcolor{yellow!15}31.7
      & 6.7 & \cellcolor{blue!10}0.8 \\
    GRPO & 91.2 & 62.5 & \underline{96.2} & 61.9 & 65.0 & 47.4 & \cellcolor{yellow!15}75.0
      & 39.3 & 60.6 & 41.1 & 37.4 & 34.6 & 15.4 & 26.4 & \cellcolor{yellow!15}36.4
      & 79.8 & \cellcolor{blue!10}63.3 \\
    Skill-Prompt & 51.7 & 66.7 & 48.4 & 0.0 & 4.3 & 10.0 & \cellcolor{yellow!15}28.9
      & 23.7 & 46.2 & 30.6 & 24.4 & 22.1 & 7.5 & 12.5 & \cellcolor{yellow!15}23.9
      & 0.2 & \cellcolor{blue!10}0.8 \\
    OPSD & 48.8 & 41.7 & 16.7 & 0.0 & 15.8 & 16.7 & \cellcolor{yellow!15}28.1
      & 0.1 & 0.1 & 0.1 & 0.0 & 0.0 & 0.0 & 0.0 & \cellcolor{yellow!15}0.0
      & 11.3 & \cellcolor{blue!10}3.1 \\
    Skill-GRPO & \underline{94.3} & 57.1 & \textbf{100.0} & 66.7 & 73.1 & 57.1 & \cellcolor{yellow!15}80.5
      & 44.3 & 59.6 & 44.3 & 39.0 & 36.1 & 14.5 & 14.9 & \cellcolor{yellow!15}36.1
      & 76.3 & \cellcolor{blue!10}66.4 \\
    GRPO+OPSD & \textbf{100.0} & \underline{82.4} & 85.7 & \underline{75.0} & 70.0 & 60.0 & \cellcolor{yellow!15}81.2
      & \underline{44.9} & 61.2 & \textbf{45.2} & 40.4 & 38.5 & 16.0 & \underline{66.1} & \cellcolor{yellow!15}\underline{44.6}
      & 77.8 & \cellcolor{blue!10}66.4 \\
    Skill-SD & 88.2 & 50.0 & \underline{96.2} & 52.4 & 65.0 & 57.9 & \cellcolor{yellow!15}73.4
      & 44.4 & 60.4 & 44.0 & 39.5 & \underline{40.4} & 15.4 & 64.9 & \cellcolor{yellow!15}44.1
      & 75.9 & \cellcolor{blue!10}64.0 \\
    RLSD & 87.9 & 75.0 & 90.9 & \underline{75.0} & 73.1 & 68.4 & \cellcolor{yellow!15}79.7
      & 41.5 & 58.6 & 42.3 & 40.4 & 40.2 & 16.8 & \textbf{66.9} & \cellcolor{yellow!15}43.8
      & 84.4 & \cellcolor{blue!10}66.4 \\
    SDAR & 85.2 & 69.8 & 88.6 & \textbf{85.6} & \underline{80.4} & \underline{74.1} & \cellcolor{yellow!15}81.8
      & 43.7 & 59.2 & 43.9 & 38.0 & 36.4 & 14.9 & 63.7 & \cellcolor{yellow!15}43.4
      & 83.8 & \cellcolor{blue!10}69.3 \\
    OPID & 94.2 & \textbf{100.0} & 90.0 & 66.7 & 74.1 & 63.1 & \cellcolor{yellow!15}\underline{82.8}
      & \textbf{45.1} & \underline{61.4} & 43.4 & \underline{40.5} & 38.9 & 17.2 & 60.3 & \cellcolor{yellow!15}\underline{44.6}
      & 83.7 & \cellcolor{blue!10}74.5 \\
    GEPO & 90.7 & 76.4 & 89.5 & 64.8 & 78.2 & 72.0 & \cellcolor{yellow!15}80.7
      & 43.7 & \textbf{62.1} & 44.4 & 40.2 & 37.5 & \textbf{17.9} & 65.0 & \cellcolor{yellow!15}\underline{44.6}
      & \underline{88.1} & \cellcolor{blue!10}\underline{75.3} \\
    \midrule
    \rowcolor{gray!8}
    \textbf{Ours} & 90.5 & 73.5 & 92.9 & \textbf{85.6} & \textbf{89.6} & \textbf{88.3} & \cellcolor{yellow!15}\textbf{88.6}
      & 43.8 & 60.3 & \underline{45.0} & \textbf{41.0} & \textbf{41.8} & \underline{17.4} & 64.6 & \cellcolor{yellow!15}\textbf{46.1}
      & \textbf{90.7} & \cellcolor{blue!10}\textbf{78.9} \\
    \midrule
    \multicolumn{18}{l}{\textit{Qwen2.5-7B-Instruct}} \\
    \midrule
    Vanilla & 36.1 & 22.2 & 3.1 & 0.0 & 0.0 & 0.0 & \cellcolor{yellow!15}12.5
      & 25.2 & 50.8 & 29.5 & 29.0 & 29.0 & 10.4 & 63.7 & \cellcolor{yellow!15}33.9
      & 5.9 & \cellcolor{blue!10}1.6 \\
    GRPO & 91.2 & \underline{87.5} & 96.2 & 81.0 & 65.0 & 57.9 & \cellcolor{yellow!15}81.2
      & 45.1 & 63.7 & 44.0 & 43.6 & 43.2 & 16.8 & 37.6 & \cellcolor{yellow!15}42.0
      & 80.9 & \cellcolor{blue!10}72.6 \\
    Skill-Prompt & 51.7 & 50.0 & 32.3 & 5.3 & 4.3 & 0.0 & \cellcolor{yellow!15}23.4
      & 30.9 & 52.1 & 32.7 & 32.7 & 27.9 & 12.7 & 66.1 & \cellcolor{yellow!15}36.4
      & 1.7 & \cellcolor{blue!10}0.8 \\
    OPSD & 50.0 & 60.0 & 22.7 & 21.4 & 17.6 & 9.5 & \cellcolor{yellow!15}32.8
      & 8.8 & 8.6 & 17.5 & 2.5 & 4.2 & 0.5 & 1.2 & \cellcolor{yellow!15}6.2
      & 4.5 & \cellcolor{blue!10}2.3 \\
    Skill-GRPO & \textbf{100.0} & 83.3 & 96.4 & 83.3 & 75.0 & 78.9 & \cellcolor{yellow!15}88.3
      & 44.8 & 63.0 & 45.1 & 43.7 & 43.7 & \underline{20.5} & \underline{71.4} & \cellcolor{yellow!15}47.5
      & 87.0 & \cellcolor{blue!10}81.2 \\
    GRPO+OPSD & 91.4 & 61.5 & \textbf{100.0} & 87.5 & 76.5 & 52.2 & \cellcolor{yellow!15}80.4
      & \textbf{47.3} & \textbf{64.5} & \underline{46.9} & 43.8 & 39.3 & 18.0 & 69.4 & \cellcolor{yellow!15}47.0
      & 86.8 & \cellcolor{blue!10}76.5 \\
    Skill-SD & 93.9 & \textbf{93.8} & 90.9 & \textbf{100.0} & 69.2 & 68.4 & \cellcolor{yellow!15}85.1
      & 47.1 & \textbf{64.5} & \textbf{47.8} & 44.2 & 42.1 & 20.2 & 69.0 & \cellcolor{yellow!15}47.8
      & 86.1 & \cellcolor{blue!10}76.5 \\
    RLSD & \textbf{100.0} & \underline{87.5} & 92.3 & 58.8 & 80.0 & 65.2 & \cellcolor{yellow!15}82.0
      & 46.8 & 63.0 & 44.4 & \textbf{45.5} & \textbf{48.9} & \textbf{21.5} & \textbf{73.0} & \cellcolor{yellow!15}\textbf{49.0}
      & 87.4 & \cellcolor{blue!10}77.3 \\
    SDAR & 91.1 & 78.9 & \underline{97.2} & 82.8 & 75.5 & 76.8 & \cellcolor{yellow!15}84.4
      & 46.4 & 63.0 & 44.7 & 43.2 & \underline{44.5} & 19.4 & 64.1 & \cellcolor{yellow!15}47.5
      & \underline{88.0} & \cellcolor{blue!10}\underline{82.0} \\
    OPID & 97.5 & 80.0 & 96.8 & 93.6 & \underline{86.9} & 78.5 & \cellcolor{yellow!15}\underline{89.1}
      & \underline{47.2} & \underline{64.2} & 46.1 & \underline{44.4} & 43.0 & 20.1 & 67.8 & \cellcolor{yellow!15}\underline{48.2}
      & 87.0 & \cellcolor{blue!10}80.0 \\
    GEPO & 90.7 & 80.1 & 96.3 & 77.3 & 83.4 & \underline{79.2} & \cellcolor{yellow!15}85.9
      & 43.9 & 61.6 & 44.1 & 40.8 & 41.3 & 17.5 & 65.2 & \cellcolor{yellow!15}45.5
      & 87.1 & \cellcolor{blue!10}80.0 \\
    \midrule
    \rowcolor{gray!8}
    \textbf{Ours} & \underline{97.8} & 70.6 & 94.5 & \underline{97.2} & \textbf{92.5} & \textbf{88.9} & \cellcolor{yellow!15}\textbf{91.9}
      & 46.4 & 63.3 & 45.2 & 43.2 & 42.9 & 19.2 & 64.0 & \cellcolor{yellow!15}47.3
      & \textbf{90.4} & \cellcolor{blue!10}\textbf{83.6} \\
    \midrule
    \multicolumn{18}{l}{\textit{Qwen3-1.7B}} \\
    \midrule
    Vanilla & 25.0 & 22.2 & 3.1 & 0.0 & 21.4 & 4.2 & \cellcolor{yellow!15}12.5
      & 29.4 & 46.9 & 37.0 & 23.5 & 19.6 & 6.4 & 10.5 & \cellcolor{yellow!15}24.8
      & 46.5 & \cellcolor{blue!10}4.7 \\
    GRPO & 71.1 & 41.7 & 36.4 & 40.0 & 31.8 & 31.6 & \cellcolor{yellow!15}46.1
      & 40.0 & 58.9 & 43.5 & 35.4 & 30.3 & 12.0 & \underline{65.7} & \cellcolor{yellow!15}40.8
      & 67.3 & \cellcolor{blue!10}38.3 \\
    Skill-Prompt & 10.3 & 50.0 & 16.1 & 0.0 & 0.0 & 5.0 & \cellcolor{yellow!15}9.4
      & 29.4 & 46.5 & 36.2 & 22.9 & 20.8 & 4.3 & 10.1 & \cellcolor{yellow!15}24.3
      & 23.0 & \cellcolor{blue!10}2.3 \\
    OPSD & 26.3 & 33.3 & 9.1 & 0.0 & 4.5 & 5.3 & \cellcolor{yellow!15}14.1
      & 4.2 & 8.3 & 4.6 & 6.6 & 15.3 & 0.7 & 1.2 & \cellcolor{yellow!15}5.8
      & 47.4 & \cellcolor{blue!10}9.3 \\
    Skill-GRPO & 31.4 & 42.9 & 51.9 & 8.3 & 11.5 & 7.1 & \cellcolor{yellow!15}28.1
      & 38.0 & 58.4 & 43.9 & 36.3 & 29.0 & 12.5 & \textbf{66.9} & \cellcolor{yellow!15}40.7
      & 80.4 & \cellcolor{blue!10}50.0 \\
    GRPO+OPSD & 38.2 & 50.0 & 30.8 & 28.6 & 30.0 & 21.1 & \cellcolor{yellow!15}32.0
      & \underline{40.7} & 58.9 & \underline{45.0} & \underline{37.0} & 34.6 & 13.3 & \underline{65.7} & \cellcolor{yellow!15}\underline{42.2}
      & 70.7 & \cellcolor{blue!10}38.3 \\
    Skill-SD & 52.9 & 37.5 & 69.2 & 42.9 & \underline{60.0} & 36.8 & \cellcolor{yellow!15}52.3
      & 39.1 & 57.5 & \textbf{45.4} & 34.8 & 34.1 & 10.7 & 64.1 & \cellcolor{yellow!15}40.8
      & 81.8 & \cellcolor{blue!10}53.9 \\
    RLSD & 50.0 & 37.5 & 61.5 & 19.0 & 50.0 & 21.1 & \cellcolor{yellow!15}42.2
      & 38.6 & 57.3 & 43.0 & 34.5 & 34.1 & 11.5 & 65.3 & \cellcolor{yellow!15}40.6
      & 74.0 & \cellcolor{blue!10}50.8 \\
    SDAR & \textbf{77.0} & 48.2 & 61.4 & 34.1 & 32.8 & 31.3 & \cellcolor{yellow!15}51.3
      & 38.7 & 57.8 & 41.8 & 35.2 & 33.9 & 12.9 & 62.1 & \cellcolor{yellow!15}40.7
      & 76.7 & \cellcolor{blue!10}59.9 \\
    OPID & \underline{75.2} & \textbf{59.0} & 61.9 & \underline{62.5} & 50.9 & \textbf{37.9} & \cellcolor{yellow!15}\underline{58.2}
      & 37.3 & 57.6 & 40.4 & 34.7 & 31.5 & 13.0 & 62.9 & \cellcolor{yellow!15}39.6
      & 76.7 & \cellcolor{blue!10}66.4 \\
    GEPO & 72.9 & 54.0 & \textbf{71.6} & \textbf{65.9} & 45.9 & 35.4 & \cellcolor{yellow!15}58.1
      & 39.4 & \underline{59.0} & 42.0 & 36.3 & \underline{34.9} & \underline{13.9} & 53.5 & \cellcolor{yellow!15}40.1
      & \underline{82.6} & \cellcolor{blue!10}\underline{67.9} \\
    \midrule
    \rowcolor{gray!8}
    \textbf{Ours} & 74.5 & \underline{56.7} & \underline{71.1} & 61.1 & \textbf{70.9} & \underline{37.1} & \cellcolor{yellow!15}\textbf{63.3}
      & \textbf{42.8} & \textbf{59.6} & 40.3 & \textbf{39.3} & \textbf{35.1} & \textbf{16.3} & 59.3 & \cellcolor{yellow!15}\textbf{42.9}
      & \textbf{85.7} & \cellcolor{blue!10}\textbf{73.4} \\
    \bottomrule
  \end{tabular}%
  }
\end{table}

\par\medskip
\noindent
\begin{minipage}{\linewidth}

  \begin{minipage}[t]{0.57\linewidth}
    \vspace{0pt}
    \centering
    \captionof{table}{%
      ALFWorld and WebShop success rate in percent for
      Qwen2.5-7B-Instruct across the signal-mixing
      coefficient $\alpha$.
    }
    \label{tab:alpha_sweep_ws}

    \small
    \setlength{\tabcolsep}{3pt}
    \begin{tabular*}{\linewidth}{@{\extracolsep{\fill}}lccccc@{}}
      \toprule
      $\alpha$ & 0 & 0.2 & 0.5 & 0.7 & 1.0 \\
      \midrule
      ALFWorld
        & 89.1 & 90.6 & \textbf{91.9} & 87.5 & 87.5 \\
      WebShop
        & 81.3 & 79.7 & \textbf{83.6} & 76.6 & 79.6 \\
      \bottomrule
    \end{tabular*}
  \end{minipage}\hfill%
  \begin{minipage}[t]{0.40\linewidth}
    \vspace{0pt}
    \centering
    \captionof{table}{%
      \textbf{Credit-content audit on ALFWorld at 7B.}
      Metrics defined in the text.
    }
    \label{tab:credit_dose}

    \small
    \setlength{\tabcolsep}{3pt}
    \begin{tabular*}{\linewidth}{@{\extracolsep{\fill}}lcc@{}}
      \toprule
      Signal & $\rho$ & Hit@10\% \\
      \midrule
      $\ell_{\text{graph}}$ & \textbf{0.58} & \textbf{0.48} \\
      $\ell_{\text{model}}$ & 0.34 & 0.41 \\
      Random               & $-0.06$ & 0.13 \\
      \bottomrule
    \end{tabular*}
  \end{minipage}

\end{minipage}
\par\medskip

\subsection{Ablation Study}
\label{sec:ablation}

We ablate four design choices to isolate each component's contribution, and
add a shuffled-centrality control to test whether the gain comes from the
structural credit signal's content or from the mere act of non-uniform
weighting.
Table~\ref{tab:ablation} reports results on WebShop, SearchQA, and
ALFWorld.
Divergence-only, which drops the structural credit signal, costs 3.9 to
10.5 pp, so the structural credit signal carries substantial information
about which steps matter.
Structural-only, which drops the divergence signal, is also necessary at
2.3 to 6.5 pp, and the two signals therefore act as complements rather than
substitutes.
Soft-Pruning, which replaces the entity-specificity dependency-edge
weighting with a constant weight, costs 0.8 to 5.5 pp, less than dropping
either signal outright at every one of the nine benchmark-scale cells.
Shuffled Centrality keeps the mask's sparsity and weight distribution but
randomizes which step receives which weight.
At 7B, and on WebShop and ALFWorld at 3B, it costs \emph{more} than
Divergence-only by up to 1.6 pp, so misassigning the structural credit
scores to the wrong steps hurts more than having no structural signal at
all.
At 1.7B, and on SearchQA at 3B, the pattern reverses and Shuffled costs no
more than Divergence-only, most sharply on SearchQA at 1.7B, where the two
cost 2.8 pp and 10.5 pp respectively.
At the smallest scale, then, the redistribution of supervision weight itself
carries part of the structural credit signal's benefit.

\begin{table}[t]
  \centering
  \caption{%
    \textbf{Ablation study.}
    Success rate (WebShop), accuracy (SearchQA), and success rate
    (ALFWorld).
    Variant definitions are given in the text above, and the GraphOPD row
    reproduces the three-run means of the full method, while each ablation
    variant is a single-run point estimate.
    \textbf{Bold} marks the best result per column within each model
    block.
  }
  \label{tab:ablation}
  \renewcommand{\arraystretch}{0.88}
  \setlength{\tabcolsep}{4pt}
  \resizebox{\linewidth}{!}{%
  \footnotesize
  \begin{tabular}{@{}l ccc ccc ccc@{}}
    \toprule
    & \multicolumn{3}{c}{\textbf{Qwen3-1.7B}}
    & \multicolumn{3}{c}{\textbf{Qwen2.5-3B}}
    & \multicolumn{3}{c}{\textbf{Qwen2.5-7B}} \\
    \cmidrule(lr){2-4} \cmidrule(lr){5-7} \cmidrule(lr){8-10}
    \textbf{Method}
      & WebShop
      & SearchQA
      & ALFWorld
      & WebShop
      & SearchQA
      & ALFWorld
      & WebShop
      & SearchQA
      & ALFWorld \\
    \midrule
    \textbf{GraphOPD}
      & \textbf{73.4} & \textbf{42.9} & \textbf{63.3}
      & \textbf{78.9} & \textbf{46.1} & \textbf{88.6}
      & \textbf{83.6} & \textbf{47.3} & \textbf{91.9} \\
    \quad Soft-Pruning
      & 71.1 & 37.4 & 62.5
      & 77.3 & 44.2 & 85.9
      & 82.0 & 45.3 & 90.6 \\
    \quad Divergence-only
      & 64.8 & 32.4 & 59.4
      & 74.2 & 40.8 & 84.4
      & 79.6 & 42.3 & 87.5 \\
    \quad Structural-only
      & 69.5 & 36.4 & 60.2
      & 75.0 & 43.3 & 84.4
      & 81.3 & 43.6 & 89.1 \\
    \quad Shuffled Centrality
      & 68.8 & 40.1 & 59.4
      & 72.7 & 41.2 & 82.8
      & 78.1 & 41.9 & 85.9 \\
    \bottomrule
  \end{tabular}%
  }
\end{table}

\paragraph{Hyperparameter sensitivity.}
\label{sec:hparam_search}
We select the mixing coefficient $\alpha$ by grid search on the ALFWorld
validation split, fixing the distillation loss coefficient $\lambda=0.05$,
and the selected configuration of $\alpha=0.5$ with $\lambda=0.05$ is
reported alongside the $\alpha$ sweep.
The sweep confirms that the gain comes from fusing both signals, since on
both ALFWorld and WebShop in Table~\ref{tab:alpha_sweep_ws} every other
$\alpha$ underperforms the selected $\alpha=0.5$, with a consistent
asymmetric cost at the pure-signal endpoints of about 4.0 to 4.4 pp for
dropping the structural credit signal at $\alpha=1$ against 2.3 to 2.8 pp
for dropping the divergence signal at $\alpha=0$.
The structural credit signal is therefore the larger single-signal
contributor.

\subsection{Analysis}
\label{sec:analysis}

\begin{figure}[h!]
  \centering
  \includegraphics[width=\linewidth]{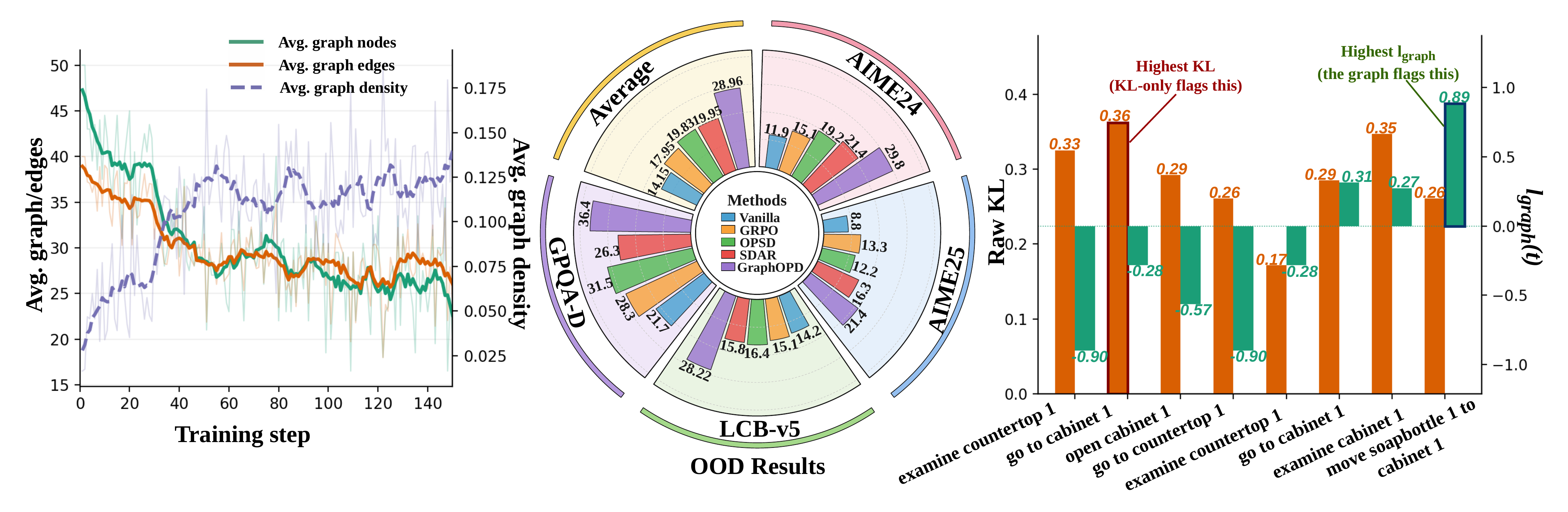}
  \caption{%
    \textbf{Analysis.}
    (\textbf{Left}) Average graph nodes, equivalently trajectory length,
    average edges, and average density of the dependency graphs built on
    ALFWorld at 7B over the 150 training steps; nodes and edges fall from
    about 47 and 39 to a plateau of 25 to 30 while density rises from about
    0.03 to between 0.10 and 0.14, so trajectories grow shorter and denser.
    (\textbf{Middle}) Accuracy radar on the four out-of-domain
    tool-integrated-reasoning benchmarks, AIME24 and AIME25 under avg@32
    and LiveCodeBench-v5 and GPQA-Diamond under avg@8, plus their average,
    for GraphOPD against Vanilla, GRPO, OPSD, and SDAR; the GraphOPD
    polygon dominates every axis, averaging 29.0\% against 20.0\% for SDAR.
    Figure~\ref{fig:analysis} (right) makes the mechanism concrete on a fragment of one real successful ALFWorld trajectory, where raw KL peaks at a cabinet revisit that never decides the outcome while $\ell_\text{graph}$ peaks at the decisive object-placement step within this fragment.
  }
  \label{fig:analysis}
\end{figure}

\paragraph{Dependency graph structure over training.}
Figure~\ref{fig:analysis} (left) tracks how GraphOPD reshapes the dependency
graph over training, testing whether centrality-weighted supervision pushes
the policy toward shorter, more decisive trajectories rather than merely
re-weighting its existing behavior.
It does, since the average number of graph nodes, equivalently trajectory
length, and the average number of edges both fall over the first 150 steps,
from about 47 and 39 at initialization to a plateau around 25 to 30, while
graph density rises correspondingly from about 0.03 to between 0.10 and
0.14, consistent with the policy reaching the goal through fewer, more
decisive actions rather than the same actions simply being scored
differently.

\paragraph{Credit-content audit.}
Table~\ref{tab:credit_dose} validates the structural credit signal.
The ablations show that the structural credit signal improves outcomes,
and the audit tests directly whether it tracks the steps that decide the
outcome, via executed replay~\citep{zhang2026credit}.
For 30 trajectories of the trained 7B policy we sample 10 decision steps
per trajectory, re-sample 4 alternative actions at each, roll forward to
the horizon, and take the induced change in success as ground-truth
impact.
For each signal we report the Spearman correlation $\rho$ with this impact
and Hit@10\%, the fraction of its top-10\% predicted steps that are truly
top-10\% impactful, for which chance is $0.10$.
$\ell_\text{graph}$ scores $\rho=0.58$ with Hit@10\% $=0.48$, clearly above
both the random baseline at $\rho=-0.06$ and Hit@10\% $=0.13$ and
$\ell_\text{model}$ at $\rho=0.34$ and Hit@10\% $=0.41$.
The structural credit signal thus identifies outcome-relevant steps more
reliably than the divergence gap alone, and the Shuffled Centrality control
isolates this same gain from mere non-uniform weighting.

\paragraph{Generalization and case study.}
Figure~\ref{fig:analysis} (middle) tests whether the dependency-graph
mechanism transfers beyond the three agentic benchmarks to a different
domain and tool, on AIME24, AIME25, LiveCodeBench-v5, and GPQA-Diamond run
as tool-integrated reasoning in mathematics, science, and code against
Vanilla, GRPO, OPSD, and SDAR under the same backbone and
Qwen2.5-7B-Instruct, and scored by avg@32 on the two AIME sets and avg@8
on the other two.
GraphOPD attains the best accuracy on all four, averaging 29.0\% against
20.0\% for SDAR and 19.8\% for OPSD, a +9.0 pp average gain over SDAR that
reaches +12.4 pp on LiveCodeBench-v5; its lead is the one pattern holding
on every benchmark, showing that the structural signal transfers where
weighting by distributional deviation alone does not.
Prompts, the Footprint extractor, and the full result table are in
Appendix~\ref{app:ood}.
Figure~\ref{fig:analysis} (right) makes the mechanism concrete on a
fragment of one real successful ALFWorld trajectory, where raw KL peaks at
a cabinet revisit that never decides the outcome while $\ell_\text{graph}$
peaks at the decisive object-placement step within this fragment.

\section{Conclusion}
\label{sec:conclusion}
We presented \textbf{GraphOPD}, which reads a per-step
structural credit score from the trajectory and fuses it with the
divergence signal to decide where a teacher's guidance is usable,
distilling only each rollout's above-average-aptitude steps.
Across three model scales, eleven baselines, and three agentic benchmarks
spanning embodied control, web navigation, and retrieval-augmented
reasoning, GraphOPD attains the best result on ALFWorld and WebShop at
every scale, with a margin over the strongest baseline that reaches
\textbf{+5.8 pp} on ALFWorld and \textbf{+5.5 pp} on WebShop, and the same
structural signal proves effective on SearchQA as well, extending the
approach to a substantially different action space.
Ablations confirm that both signals are independently necessary, with the
structural credit signal the larger contributor.
Using an LLM to extract the Footprint directly from raw trajectories,
rather than through hand-specified parsing rules, is a natural next step
toward applying GraphOPD to a still broader range of agentic domains.

\newpage
\section*{AI use statement}
\label{sec:aistatement}
In this work, we used generative AI tools to aid and polish the writing of
portions of the manuscript, including revising selected passages for
clarity and readability.
Additionally, we used generative AI tools for retrieval and discovery,
e.g., finding and screening related work and prior publications relevant
to this study.
We did not use generative AI tools for any other applicable tasks
requiring disclosure beyond those stated here.
We have reviewed all AI-assisted work.
All AI-assisted textual revisions were checked by the authors to ensure
that they preserved the intended meaning and accurately reflected the
work, and every candidate reference surfaced through AI-assisted
retrieval was independently located, read, and verified against its
original source before being cited.
We take responsibility for the final content of this work, including
text, claims, and artifacts produced with the aid of generative AI.

\section*{Reproducibility Statement}
\label{sec:reproducibility}
 All benchmarks used in this work are publicly available. All the training code is released at the URL given in the abstract and included in the Supplementary Material; the method, hyperparameters, baseline reproduction details, prompts, and audit protocol are fully specified in the paper and its appendix.

\bibliography{iclr2027_conference}
\bibliographystyle{iclr2027_conference}

\appendix
\appendix

\section{Footprint Extractor and Algorithm}
\label{app:footprint}

Each environment has a dedicated, rule-based Footprint extractor
$\mathcal{F}_\text{env}$ that maps an action and observation to a small
set of \texttt{<type>:<entity>} keys.
The extractor for the four out-of-domain tool-integrated-reasoning
benchmarks is specified in Appendix~\ref{app:ood}.

\paragraph{Footprint as a state-difference projection.}
$\mathcal{F}_\text{env}$ is the practical, text-only form of a more
general construction, given here in full.
Treat the environment state $s_t$ from Section~\ref{sec:setup} as a set of
ground predicates over named entities, for instance
\texttt{location(apple\_1)=countertop\_1}, so that the raw effect of
step $t$ is the symmetric difference
$\Delta_t = s_t \triangle s_{t-1}$, namely the predicates that came into
or out of existence, or changed value, when action $a_t$ was executed
against observation $o_t$.
Most of $\Delta_t$ is irrelevant to any later step, because the specific
new value a predicate takes rarely determines whether a downstream action
depends on it, whereas the fact that some named entity changed does.
We therefore canonicalize $\Delta_t$ through a projection $\pi$ that
retains only the \texttt{<type>:<entity>} pair each changed predicate is
indexed by and discards its value,
\begin{equation}
\mathcal{F}(t) = \pi(\Delta_t) = \mathcal{F}_{\text{env}}(a_t,\,o_t).
\label{eq:footprint_diff}
\end{equation}
$\mathcal{F}_\text{env}$ is implemented directly from $(a_t, o_t)$ rather
than from a structured $s_t$, since in a text-interfaced environment the
action and observation text are exactly what fixes $\Delta_t$ in the
first place.

\begin{lemma}[Footprint sufficiency]
\label{lemma:footprint}
$\mathcal{F}(t)$ is a sufficient statistic of $\Delta_t$ for the
dependency-edge relation $E$ of Eq.~\eqref{eq:causal_edge}.
For any two trajectories whose step-$t$ effects $\Delta_t, \Delta_t'$
project to the same Footprint, $\pi(\Delta_t)=\pi(\Delta_t')$, the edge
test returns the same decision for every later step $j$.
\end{lemma}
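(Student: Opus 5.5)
The plan is to show that the edge test of Eq.~\eqref{eq:causal_edge} depends on step $t$'s effect $\Delta_t$ only through its projection $\pi(\Delta_t)$; sufficiency then follows at once. The argument is a factorization: write the decision as a function of two arguments and check that the first argument enters only through the Footprint. Concretely, for a fixed later step $j>t$ I would define
\begin{equation*}
D_j(\Delta_t) \;=\; \mathbb{1}\bigl[\exists\,\texttt{<type>:<entity>}\in\pi(\Delta_t)\ \text{s.t.}\ \texttt{<entity>}\in\mathrm{substr}(a_j)\bigr].
\end{equation*}
This is Eq.~\eqref{eq:causal_edge} with $\mathcal{F}(t)$ replaced by $\pi(\Delta_t)$, using Eq.~\eqref{eq:footprint_diff}. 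By inspection, $D_j$ is a composition $D_j = g_j\circ\pi$, where $g_j(F)=\mathbb{1}[\exists\,(\tau,e)\in F:\ e\in\mathrm{substr}(a_j)]$. The key point is that $g_j$ reads only the entity component of each key in $F$ and the string $a_j$. It never reads a predicate value, nor any other part of $\Delta_t$.

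The steps, in order, are as follows.
\begin{enumerate}
\item Make precise that $\pi$ is a well-defined map from sets of ground predicates to sets of \texttt{<type>:<entity>} pairs. Each changed predicate is sent to the pair it is indexed by, and its value is discarded.
\item Observe that the edge condition of Eq.~\eqref{eq:causal_edge} quantifies only over elements of $\mathcal{F}(t)$. It inspects each element's entity field against $a_j$, so it factors as $g_j\circ\pi$.
\item Take two trajectories with $\pi(\Delta_t)=\pi(\Delta'_t)$ and the same later action $a_j$. Then $D_j(\Delta_t)=g_j(\pi(\Delta_t))=g_j(\pi(\Delta'_t))=D_j(\Delta'_t)$ for every $j>t$.
\item Conclude that the outgoing dependency-edge set of $v_t$ is a function of $\pi(\Delta_t)$. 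By the Fisher--Neyman-style factorization characterization, read deterministically, this makes $\mathcal{F}(t)$ sufficient for $E$.
\end{enumerate}
I would also remark that the rarity weight of Eq.~\eqref{eq:soft_prune} depends on $\Delta_t$ only through the matched entity. The weighted edges are therefore covered as well, provided $df$ is computed from the action and observation text.

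The main obstacle is step~3: pinning down what ``the same decision for every later step $j$'' quantifies over. The edge test also reads $a_j$, so the claim is only true for a common later action. In two genuinely different trajectories, $a_j$ may differ, and the equality fails trivially. I would therefore state explicitly that the comparison holds the later actions $a_j$ fixed, or equivalently treats them as exogenous inputs. Under that reading the lemma is the factorization above.

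A second subtlety is that $\mathcal{F}_\text{env}$ is computed from $(a_t,o_t)$ rather than from $\Delta_t$. So I would take the identity in Eq.~\eqref{eq:footprint_diff} as the modeling assumption that links the two, and not try to prove it.
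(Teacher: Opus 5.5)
Your proposal is correct and follows essentially the same route as the paper: both observe that the edge test reads $\Delta_t$ only through $\pi(\Delta_t)=\mathcal{F}(t)$, so effects with identical projections receive identical edge decisions. Your explicit factorization $D_j=g_j\circ\pi$, and your condition that the later actions $a_j$ be held fixed, make precise what the paper's one-paragraph argument leaves implicit.
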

\begin{proof}
Eq.~\eqref{eq:causal_edge} decides $(v_t\to v_j)\in E$ by testing whether
some \texttt{<type>:<entity>} $\in \mathcal{F}(t)$ has its
\texttt{<entity>} occur as a substring of $a_j$.
This test reads $\Delta_t$ only through $\pi(\Delta_t)=\mathcal{F}(t)$, and
it never inspects a predicate's value, i.e.\ never reads anything in
$\Delta_t\setminus\pi(\Delta_t)$.
Two effects with identical $\pi(\Delta_t)$ therefore agree on every
entity-membership query the test can make, and so yield an identical
decision for every $j$, regardless of how $\Delta_t$ and $\Delta_t'$
otherwise differ.
\end{proof}
The projection $\pi$ is lossy in general, discarding exactly the
information Lemma~\ref{lemma:footprint} shows the graph never needed.
$\mathcal{F}(t)$ is therefore not merely a convenient summary of
$\Delta_t$ but the coarsest one that still determines $E$.

Table~\ref{tab:footprint} lists the action types and Footprint key formats
used for each of the three benchmarks, and none of these extractors involve
an LLM call, semantic embedding, or privileged annotation.

\begin{table}[h]
  \centering
  \caption{Footprint extractor specifications by environment.}
  \label{tab:footprint}
  \small
  \begin{tabular}{@{}>{\raggedright\arraybackslash}p{2.0cm}>{\raggedright\arraybackslash}p{4.3cm}>{\raggedright\arraybackslash}p{6.3cm}@{}}
    \toprule
    \textbf{Environment} & \textbf{Key action types} & \textbf{Footprint key format} \\
    \midrule
    ALFWorld & go to, take, put, heat/cool/clean, open/close &
      \texttt{location:<loc>}, \texttt{visible\_at:<loc>},
      \texttt{holding:<obj>}, \texttt{placed:<obj>@<loc>},
      \texttt{object\_state:<obj>:<prop>}, \texttt{container\_state:<container>} \\
    WebShop & search[q], click[product\_id], click[attr], click[buy] &
      \texttt{query\_token:<word>}, one per query word,
      \texttt{selected\_product:<id>}, \texttt{selected\_attr:<attr>},
      \texttt{purchase\_action} \\
    SearchQA & \texttt{<search>}, \texttt{<think>}, \texttt{<answer>} &
      \texttt{query\_token:<word>}, \texttt{retrieved\_passages},
      \texttt{reasoning\_state}, \texttt{answer\_candidate},
      \texttt{passage\_entity:<entity>} \\
    \bottomrule
  \end{tabular}
\end{table}

As a concrete example, suppose step 2 of an ALFWorld trajectory is
\texttt{take apple 1 from countertop 1}, yielding
\texttt{holding:apple\,1}, and step 5 is \texttt{put apple 1 in fridge 1},
whose action text contains \texttt{apple\,1}.
A dependency edge $v_2\!\to\!v_5$ is then added with weight
$1/df(\texttt{apple\,1})$ following Eq.~\eqref{eq:soft_prune}.
Analogously, a WebShop search produces one \texttt{query\_token} key per
query word, and a later action whose text contains one of those words draws
the corresponding edge.

\paragraph{Complete algorithm.}
Algorithm~\ref{alg:graphopd} summarizes the full GraphOPD pipeline for one
trajectory, from Footprint extraction to the combined training objective.
It is a drop-in addition to the GRPO loop, since the graph, the two
signals, the aptitude, and the mask are all computed from the rollout and
its recorded states, with no extra model calls.

\begin{algorithm}[h]
  \caption{GraphOPD, per-trajectory supervision weighting}
  \label{alg:graphopd}
  \begin{algorithmic}[1]
    \REQUIRE trajectory $\tau=(s_0,a_1,\dots,a_T,s_T)$, student $\pi_\theta$,
             teacher $\pi_{\theta_\text{old}}$, coefficients $\alpha,\lambda$,
             damping $d$
    \STATE \textbf{Footprints.} For $t=1,\dots,T$, extract
             $\mathcal{F}(t)=\mathcal{F}_\text{env}(a_t,o_t)$
    \STATE \textbf{Dependency graph.} Build $G=(V,E,w)$, and for each $i<j$
             add $v_i\!\to\!v_j$ if an entity in $\mathcal{F}(i)$ is
             referenced by $a_j$, weighted $1/df(e)$ per
             Eq.~\eqref{eq:soft_prune}, then add backtrack edges of weight
             $0.5$ and sequential edges of weight $0.05$
    \STATE \textbf{Centrality.} Row-normalize $w$ into $M$, giving leaf
             nodes a uniform row $1/T$, solve
             $\boldsymbol{\pi}^*=(1-d)\boldsymbol{\pi}^*M+d\mathbf{u}^\top$
             by power iteration, and set
             $\ell_\text{graph}(t)=\log\bigl(T\,\pi^*_t\bigr)$
    \STATE \textbf{Deviation.} $\delta(t)=\frac{1}{|S_t|}\sum_{k\in S_t}
             \bigl|\log\pi_{\theta_\text{old}}(k|s_t)-\log\pi_\theta(k|s_t)\bigr|$,
             then $r_t=\delta(t)/\delta_\text{max}$ and
             $\ell_\text{model}(t)=\log\bigl[(r_t+\varepsilon)/(1-r_t+\varepsilon)\bigr]$
    \STATE \textbf{Aptitude.} $A(t)=\alpha\,\ell_\text{model}(t)+(1-\alpha)\,\ell_\text{graph}(t)$
    \STATE \textbf{Mask.} $\mu_\tau,s_\tau\leftarrow$ mean, std of
             $\{A(t)\}_{t=1}^{T}$, then
             $m(t)=\sigma\bigl((A(t)-\mu_\tau)/s_\tau\bigr)$
    \STATE \textbf{Objective.}
             $\mathcal{L}=\mathcal{L}_\text{GRPO}+\lambda\,
              \frac{\sum_t m(t)\,\mathrm{KL}(\pi_{\theta_\text{old}}(\cdot|s_t)\,\|\,\pi_\theta(\cdot|s_t))}{\sum_t m(t)}$
    \STATE Update $\theta$ by gradient descent on $\mathcal{L}$
  \end{algorithmic}
\end{algorithm}

\section{Theoretical Properties and Implementation Notes}
\label{app:theory}

Two claims made informally in Section~\ref{sec:fusion} are provable
consequences of Eqs.~\eqref{eq:stationary} to \eqref{eq:mask} rather than
free-standing design choices.
This section states and proves them, gives a full derivation of the
convergence rate they rely on, and draws three corollaries connecting that
derivation to specific empirical results, namely the observed
power-iteration count, the Soft-Pruning ablation, and the shrinking
trajectory lengths in the dependency-graph-structure analysis.

\paragraph{The stationary distribution exists, is unique, and is efficiently
computable.}
Eq.~\eqref{eq:stationary}'s update matrix
$P = (1-d)M + d\,\mathbf{1}\mathbf{u}^\top$ combines the row-stochastic
edge matrix $M$ (row-normalized, with leaf nodes given the uniform row
$1/T$) with the uniform teleportation matrix $\mathbf{1}\mathbf{u}^\top$
under weight $d\in(0,1)$.
Every entry of $P$ is therefore at least $d/T > 0$, so $P$ is a primitive,
meaning irreducible and aperiodic, row-stochastic matrix for \emph{any}
graph topology $G$, including disconnected components or absorbing states
that would make $M$ alone reducible.
By the Perron--Frobenius theorem for primitive non-negative
matrices~\citep{meyer2000matrix}, $P$'s spectral radius is a simple
eigenvalue equal to $1$, with a strictly positive left eigenvector unique
up to scale, and normalizing $\sum_t \pi^*_t = 1$ fixes that scale, so
$\boldsymbol{\pi}^*$ is well-defined regardless of $G$'s structure and
independent of the power-iteration starting point.
The same damped-teleportation construction is the basis of the classical
convergence guarantee for stationary distributions on directed
graphs~\citep{page1999pagerank}, under which $P$'s second-largest
eigenvalue modulus is bounded by $1-d$, so power iteration converges
geometrically at rate $O((1-d)^k)$, matching the empirical convergence
noted below (under $50$ iterations at $d=0.15$, $T\le50$, to tolerance
$10^{-6}$).

\begin{proposition}[Bounds on the two signals]
\label{prop:bounds}
For any step $t$ in a trajectory of length $T$ with damping $d\in(0,1)$,
\[
\log d \;\le\; \ell_\text{graph}(t) \;\le\; \log T,
\]
and for any smoothing floor $\varepsilon>0$,
\[
-\log\frac{1+\varepsilon}{\varepsilon} \;\le\; \ell_\text{model}(t) \;\le\;
\log\frac{1+\varepsilon}{\varepsilon}.
\]
\end{proposition}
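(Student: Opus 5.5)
The plan is to prove the two pairs of inequalities separately. Each follows from an elementary property of the defining equations, Eq.~\eqref{eq:stationary}--\eqref{eq:lgraph} for the first pair and Eq.~\eqref{eq:lmodel} for the second, once the well-posedness facts established just above are taken as given.

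For $\ell_\text{graph}$ I will use the existence and uniqueness discussion preceding the proposition. By the Perron--Frobenius argument for the primitive matrix $P$, $\boldsymbol{\pi}^*$ is a well-defined probability vector: it is entrywise nonnegative (in fact positive) and $\sum_t \pi^*_t = 1$.

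\emph{Upper bound.} Each entry satisfies $\pi^*_t \le \sum_{t'}\pi^*_{t'} = 1$. Hence $T\pi^*_t \le T$, and by monotonicity of $\log$ we get $\ell_\text{graph}(t)\le\log T$.

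\emph{Lower bound.} Read off coordinate $t$ of Eq.~\eqref{eq:stationary}:
\[
\pi^*_t = (1-d)\,(\boldsymbol{\pi}^*M)_t + \frac{d}{T}.
\]
Since $M$ is row-stochastic with nonnegative entries and $\boldsymbol{\pi}^*\ge 0$, we have $(\boldsymbol{\pi}^*M)_t\ge 0$. Together with $1-d>0$ this gives $\pi^*_t\ge d/T$, so $T\pi^*_t\ge d$ and $\ell_\text{graph}(t)\ge\log d$. The leaf rows $1/T$ need no separate treatment: they keep $M$ row-stochastic and nonnegative, which is all the argument uses.

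For $\ell_\text{model}$, I first check that $r_t\in[0,1]$. We have $\delta(t)\ge 0$ because it is an average of absolute values, and $\delta(t)\le\delta_\text{max}$ by definition of the maximum. The only delicate point is the degenerate case $\delta_\text{max}=0$, which I expect to be the main obstacle, since there $r_t$ is formally $0/0$. I plan to adopt the implementation convention of Appendix~\ref{app:theory}, e.g.\ $r_t=\tfrac12$ or $r_t=0$. Any convention placing $r_t$ in $[0,1]$ suffices, so the bound holds regardless of which one is chosen.

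Next I show that
\[
f(r)=\log\frac{r+\varepsilon}{1-r+\varepsilon}
\]
is strictly increasing on $[0,1]$. Its numerator is positive and increasing, and its denominator is positive (it is at least $\varepsilon$) and decreasing. Equivalently, $f(r)=\log(r+\varepsilon)-\log(1-r+\varepsilon)$, with derivative
\[
f'(r)=\frac{1}{r+\varepsilon}+\frac{1}{1-r+\varepsilon}>0 .
\]
Therefore $f(0)\le f(r_t)\le f(1)$. Finally, $f(1)=\log\frac{1+\varepsilon}{\varepsilon}$ and $f(0)=\log\frac{\varepsilon}{1+\varepsilon}=-\log\frac{1+\varepsilon}{\varepsilon}$, which are exactly the stated bounds.
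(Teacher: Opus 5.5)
Your proof is correct and follows the paper's argument: the lower bound comes from $\boldsymbol{\pi}^* \ge d\,\mathbf{u}^\top$ entrywise, the upper bound from $\pi^*_t \le 1$, and the $\ell_\text{model}$ range from $r_t\in[0,1]$ together with monotonicity of the log-odds map, which the paper uses implicitly when it states the range of the ratio. Your treatment of $\delta_\text{max}=0$ goes slightly beyond the paper, which just says $r_t\in[0,1]$ ``by construction''; note that Appendix~\ref{app:theory} fixes a convention only for $s_\tau=0$, not for $\delta_\text{max}=0$, but as you observe, any convention with $r_t\in[0,1]$ suffices.
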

\begin{proof}
For $\ell_\text{graph}$, since $M$ is entrywise non-negative and
$\boldsymbol{\pi}^*$ is non-negative, Eq.~\eqref{eq:stationary} gives
$\boldsymbol{\pi}^* = (1-d)\boldsymbol{\pi}^*M + d\,\mathbf{u}^\top \ge
d\,\mathbf{u}^\top$ entrywise, so $\pi^*_t \ge d/T$ for every $t$ and
$\ell_\text{graph}(t) = \log(T\pi^*_t) \ge \log d$.
The upper bound follows because $\pi^*_t \le 1$ trivially, as
$\boldsymbol{\pi}^*$ is a probability distribution over $T$ steps, giving
$\ell_\text{graph}(t) \le \log T$.
For $\ell_\text{model}$, $r_t \in [0,1]$ by construction, so
$\frac{r_t+\varepsilon}{1-r_t+\varepsilon} \in
\bigl[\frac{\varepsilon}{1+\varepsilon}, \frac{1+\varepsilon}{\varepsilon}\bigr]$,
and taking logs gives the stated bound.
\end{proof}
With the values used throughout, $d=0.15$ and $\varepsilon=0.1$, this gives
$\ell_\text{graph}(t) \in [-1.90, \log T]$ and
$\ell_\text{model}(t) \in [-2.40, 2.40]$, and for trajectories capped at
$T=50$ steps, $\log T \le 3.91$, so the two signals occupy overlapping
ranges of comparable magnitude, the exact sense in which the two signals
are commensurate.

\begin{proposition}[Affine invariance of the trajectory-relative mask]
\label{prop:affine}
Let $A'(t) = a\cdot A(t) + b$ for constants $a>0, b\in\mathbb{R}$ applied
uniformly across a trajectory $\tau$.
Then the mask computed from $A'$ equals the mask computed from $A$,
$m'(t) = m(t)$ for every $t \in \tau$.
\end{proposition}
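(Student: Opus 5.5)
The plan is a direct computation: show that the standardized score $z(t) = (A(t)-\mu_\tau)/s_\tau$ appearing inside the sigmoid in Eq.~\eqref{eq:mask} is unchanged under the transformation. Then $m'(t)=\sigma(z'(t))=\sigma(z(t))=m(t)$ follows immediately, because $\sigma$ is applied pointwise and receives the same argument.

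The key steps, in order, are as follows.

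First, compute the trajectory mean of the transformed aptitudes. By linearity of the average over $t=1,\dots,T$,
\[
\mu'_\tau = \frac{1}{T}\sum_{t} \bigl(aA(t)+b\bigr) = a\mu_\tau + b .
\]
Second, compute the spread. The deviations satisfy $A'(t)-\mu'_\tau = a\bigl(A(t)-\mu_\tau\bigr)$, since the shift $b$ cancels. Hence the variance scales by $a^2$. Whichever normalization ($1/T$ or $1/(T-1)$) the paper's standard deviation uses, it is applied identically to both, so $s'_\tau=\sqrt{a^2}\,s_\tau=|a|\,s_\tau=a\,s_\tau$, using $a>0$. Third, divide:
\[
z'(t)=\frac{a\bigl(A(t)-\mu_\tau\bigr)}{a\,s_\tau}=z(t).
\]
Applying $\sigma$ then gives the claim.

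The only point needing care is the degenerate case $s_\tau=0$, which the paper defers to Appendix~\ref{app:theory}. Here I would note that $s'_\tau=a s_\tau$ vanishes exactly when $s_\tau$ does, since $a>0$. So the two computations fall into the degenerate branch simultaneously. Whatever fixed convention is used there (for example, a constant mask of $\tfrac12$, or an $\epsilon$-guarded denominator) yields the same output for both. If an $\epsilon$-guard $s_\tau+\epsilon$ is used in the non-degenerate case too, exact invariance would fail. I would therefore state the proposition for the unguarded formula of Eq.~\eqref{eq:mask} and handle $s_\tau=0$ by the constant-mask convention.

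I expect this degenerate-case bookkeeping to be the main obstacle, not the algebra. I would also remark that $a>0$ is essential: with $a<0$ one gets $z'=-z$ and the mask becomes $1-m$.
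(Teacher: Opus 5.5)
Your proposal is correct and follows essentially the same route as the paper's proof: $\mu'_\tau=a\mu_\tau+b$ and $s'_\tau=a\,s_\tau$ (using $a>0$) leave the standardized argument of $\sigma$ unchanged, so $m'(t)=m(t)$. Your treatment of $s_\tau=0$ goes beyond the paper's proof, which ignores that case. Your argument is consistent with the paper's stated convention $m(t)=\sigma(0)=\tfrac12$, since $s'_\tau=a\,s_\tau$ vanishes exactly when $s_\tau$ does.
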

\begin{proof}
Let $\mu_\tau, s_\tau$ be the mean and standard deviation of
$\{A(t)\}_{t=1}^{T}$.
The corresponding statistics of $\{A'(t)\}$ are $\mu'_\tau = a\mu_\tau + b$
and $s'_\tau = a\,s_\tau$, since variance scales by $a^2$ and $a>0$, so the
standard deviation scales by $a$.
Then
\[
\frac{A'(t)-\mu'_\tau}{s'_\tau}
= \frac{(aA(t)+b) - (a\mu_\tau+b)}{a\,s_\tau}
= \frac{A(t)-\mu_\tau}{s_\tau},
\]
so $m'(t) = \sigma\bigl(\tfrac{A'(t)-\mu'_\tau}{s'_\tau}\bigr) =
\sigma\bigl(\tfrac{A(t)-\mu_\tau}{s_\tau}\bigr) = m(t)$.
\end{proof}
This is the precise sense in which the mask stays meaningful as $A(t)$'s
absolute scale drifts over training.
Any rescaling of the aptitude that is common across a trajectory's own
steps leaves every mask weight in that trajectory unchanged, and such a
common rescaling is exactly what happens as $\delta_\text{max}$ and the
graph's stationary distribution shift with the policy.
A threshold fixed across trajectories has no such guarantee, since
Proposition~\ref{prop:affine} only holds for a single shared $(a,b)$
within one trajectory, and different trajectories drift by different
amounts, which is why a global threshold would need continual re-tuning
while the trajectory-relative one does not.

\paragraph{A first-principles proof of the convergence rate.}
The Perron--Frobenius argument above is an existence and uniqueness
statement, and it does not by itself derive the geometric rate
$O((1-d)^k)$ used throughout the paper.
The same rate follows from an elementary contraction argument that avoids
any assumption on $M$'s spectrum.
For a row-stochastic matrix $M$ and any real vector $x$,
\begin{equation}
\|xM\|_1 = \sum_j \Bigl|\sum_i x_i M_{ij}\Bigr|
  \le \sum_j\sum_i |x_i| M_{ij}
  = \sum_i |x_i| \sum_j M_{ij}
  = \|x\|_1,
\label{eq:l1_nonexpansive}
\end{equation}
using only the triangle inequality and $\sum_j M_{ij}=1$.
Let $\boldsymbol{\pi}_k$ be the $k$-th power-iteration estimate and
$\boldsymbol{\pi}^*$ the fixed point of Eq.~\eqref{eq:stationary}.
Subtracting $\boldsymbol{\pi}^* = (1-d)\boldsymbol{\pi}^*M + d\mathbf{u}^\top$
from $\boldsymbol{\pi}_{k+1} = (1-d)\boldsymbol{\pi}_k M + d\mathbf{u}^\top$
cancels the teleportation term exactly, leaving
$\boldsymbol{\pi}_{k+1}-\boldsymbol{\pi}^* = (1-d)(\boldsymbol{\pi}_k-\boldsymbol{\pi}^*)M$,
so applying Eq.~\eqref{eq:l1_nonexpansive} with
$x=\boldsymbol{\pi}_k-\boldsymbol{\pi}^*$ and iterating gives
\begin{equation}
\|\boldsymbol{\pi}_k-\boldsymbol{\pi}^*\|_1
  \;\le\; (1-d)^k\,\|\boldsymbol{\pi}_0-\boldsymbol{\pi}^*\|_1
  \;\le\; 2(1-d)^k,
\label{eq:convergence_bound}
\end{equation}
where the last step uses that both are probability vectors, so their $L_1$
distance is at most $2$ regardless of initialization or graph topology.
Solving Eq.~\eqref{eq:convergence_bound} for the smallest $k$ that reaches
a target tolerance $\varepsilon_{\text{tol}}$ gives
$k \ge \log(2/\varepsilon_{\text{tol}}) / (-\log(1-d))$, and at $d=0.15$
with the training tolerance $\varepsilon_{\text{tol}}=10^{-6}$ used
throughout, this bound is $k \approx 89$.
The power iteration in practice converges in under $50$ steps, roughly half
of this worst-case figure, since Eq.~\eqref{eq:convergence_bound} bounds
the distance from an \emph{arbitrary} initialization by the loosest
possible constant $2$, the diameter of the probability simplex in $L_1$,
while power iteration is initialized at the uniform distribution
$\mathbf{u}$ itself, which is already closer to $\boldsymbol{\pi}^*$ than a
worst-case starting point.

\paragraph{Corollary on the edge-weighting scheme.}
Eq.~\eqref{eq:convergence_bound}'s derivation uses only that $M$ is
row-stochastic, and it does not use how the pre-normalization edge weights
$w(v_i\to v_j)$ were assigned.
The soft-pruning weights $w=1/df(e)$ of Eq.~\eqref{eq:soft_prune} and the
Soft-Pruning ablation's constant weights (Table~\ref{tab:ablation}) both
row-normalize to a valid $M$, so Proposition~\ref{prop:bounds}'s existence,
uniqueness, and convergence guarantees hold identically for both.
The gap between them in Table~\ref{tab:ablation}, 0.8 to 5.5 pp in
Section~\ref{sec:ablation}, is therefore attributable entirely to which
entities the two weighting schemes make central, not to any difference in
whether $\boldsymbol{\pi}^*$ is well-defined or how fast it is reached.

\paragraph{Corollary on signal range and trajectory length.}
Proposition~\ref{prop:bounds} bounds $\ell_\text{graph}(t)$ within
$[\log d,\, \log T]$, a range whose width is set entirely by the current
trajectory's length $T$, equivalently the graph's node count, once $d$ is
fixed.
Figure~\ref{fig:analysis} (left) reports $T$ contracting from about $47$
nodes at initialization to a plateau of $25$ to $30$ over the first $150$
training steps.
By Proposition~\ref{prop:bounds}, this alone narrows the theoretical upper
end of $\ell_\text{graph}$'s range from $\log 47 \approx 3.85$ to between
$\log 25\approx3.22$ and $\log 30\approx3.40$, while the lower end,
$\log d \approx -1.90$, is unaffected by $T$.
The structural credit signal therefore continues to operate over a range
of comparable width as trajectories shorten, rather than collapsing toward
the neutral point $0$ as $T$ falls.
This is consistent with the ablations and the audit finding the
structural credit signal informative throughout training rather than only
early on.

\paragraph{Intermediate steps carry structural credit, not only terminal
ones.}
Eq.~\eqref{eq:stationary} is a fixed point over all steps, not an
accumulation at the trajectory's end: $\pi^*_t=(1-d)\sum_i \pi^*_i M_{it}
+ d/T$, so a step's mass is exactly the rarity-weighted inflow of the
dependency chains its action consumes, plus the teleport floor.
Accordingly $\ell_\text{graph}(t)>0$ precisely when this inflow exceeds the
uniform level $1/T$, which happens wherever several upstream state changes
are exercised together, at intermediate milestones such as retrieving an
object, opening or heating it, or carrying it to a target receptacle, and
not only at the final action.
Two properties keep this intermediate credit actionable: the mask of
Eq.~\eqref{eq:mask} standardizes aptitude within each trajectory, so a
mid-trajectory milestone that outranks the resolving step in its own
rollout receives the larger supervision weight, and
Proposition~\ref{prop:bounds} floors every step's mass at $d/T$, so no step
is starved before masking.
Empirically, the credit-content audit, which perturbs decision points
across sampled trajectories, ranks them against executed-replay impact at
$\rho=0.58$ with Hit@10\% $=0.48$ against a chance level of $0.10$.

\paragraph{The role of the damping factor.}
The damping factor $d$ trades off two properties that
Eq.~\eqref{eq:convergence_bound} and Proposition~\ref{prop:bounds} make
explicit and opposing.
A larger $d$ tightens the convergence bound, $k=O(1/d)$ for small $d$, but
also raises the floor $\log d$ toward $0$, compressing $\ell_\text{graph}$'s
range and pulling $\boldsymbol{\pi}^*$ closer to the uninformative uniform
distribution $\mathbf{u}$ in the limit $d\to1$.
A smaller $d$ widens the usable range through a lower, more negative floor
that lets peripheral steps register as clearly below neutral rather than
merely below average, but it slows convergence and, in the degenerate limit
$d\to0$, forfeits the guarantee above that $M$ alone is primitive on every
graph topology.
The selected $d=0.15$ sits well inside this trade-off, since it is small
enough that $\log d\approx-1.90$ leaves $\ell_\text{graph}$ a range
comparable to $\ell_\text{model}$'s $\pm2.40$
(Proposition~\ref{prop:bounds}), while large enough that the $89$-iteration
worst-case bound above remains a small fraction of a single training step's
compute budget.

\paragraph{Graph construction cost.}
Graph construction is $O(T^2|\mathcal{F}|)$ per trajectory, negligible at
typical agent trajectory lengths, $T \leq 50$.
Table~\ref{tab:graph_cost} measures this directly, comparing the
wall-clock time spent on Footprint extraction, edge construction, and
the stationary-distribution solve against the total per-training-step
time, where graph construction accounts for well under half a percent
of the step budget on every benchmark.

\begin{table}[h]
  \centering
  \caption{Wall-clock cost of graph construction relative to the total
    per-training-step time.}
  \label{tab:graph_cost}
  \begin{tabular}{@{}lccc@{}}
    \toprule
    Benchmark & Graph construction (s/step) & Total step time (s) & Share \\
    \midrule
    ALFWorld & 0.6659 & 342.0936 & 0.19\% \\
    WebShop  & 0.4535 & 144.2885 & 0.31\% \\
    SearchQA & 0.4144 & 322.2142 & 0.13\% \\
    \bottomrule
  \end{tabular}
\end{table}

\paragraph{Centrality computation.}
The damping term $d > 0$ guarantees a unique stationary distribution
regardless of graph topology, by the Perron--Frobenius argument given
above, and it is obtained by power iteration to tolerance
$\|\boldsymbol{\pi}^*_{k+1} - \boldsymbol{\pi}^*_k\|_1 < 10^{-6}$, which
converges in under 50 iterations for $T = 50$, consistent with the
$O((1-d)^k)$ geometric rate proved there.

\paragraph{Numerical details.}
The floor $\varepsilon = 0.1$ in Eq.~\eqref{eq:lmodel} bounds the
divergence signal and keeps the log-odds finite at $r_t\in\{0,1\}$, and
Proposition~\ref{prop:bounds} gives the exact range of both signals and
why neither can unilaterally dominate the fusion.
Because $\sigma(\cdot)$ lies strictly between $0$ and $1$ for any finite
argument, the mask sum $\sum_t m(t)$ is positive on every trajectory with
$s_\tau>0$.
On the rare trajectory where $A(t)$ is exactly constant across all steps,
so that $s_\tau=0$, no step's aptitude differs from the trajectory's own
average, so we set $m(t)=\sigma(0)=\tfrac12$ for all $t$, i.e.\ uniform
weighting.
Either way, the normalized loss in Eq.~\eqref{eq:graphopd_loss} is always
a well-defined weighted average, and the normalization keeps the loss
magnitude invariant to mask sparsity, so $\lambda$ needs no re-tuning as
sparsity shifts over training.

\paragraph{Soft mask.}
Unlike hard-threshold masking, the soft mask varies smoothly with the
standardized aptitude, reallocating the supervision budget gradually
rather than admitting or excluding a step outright, and it is computed from
the rollout's detached statistics, acting as a fixed per-step weight within
each update.
Standardizing within the trajectory before the sigmoid, rather than
applying one threshold across all trajectories, keeps the same relative
cutoff, above this trajectory's own average aptitude, meaningful
regardless of where that trajectory's absolute aptitude scale happens to
sit at that point in training (Proposition~\ref{prop:affine}).

\paragraph{Effective concentration of the soft mask.}
Read literally, the mask of Eq.~\eqref{eq:mask} never excludes a step:
$\sigma$ maps every finite aptitude into $(0,1)$, so every step retains a
positive weight and the normalized loss of Eq.~\eqref{eq:graphopd_loss} is
a re-weighting of the distillation term rather than a hard selection of
the kind probed in Figure~\ref{fig:motivation}.
The relevant question is therefore how strongly the re-weighting
concentrates, and the standardized form of the mask makes this quantifiable:
since $z(t)=(A(t)-\mu_\tau)/s_\tau$ is exactly zero-mean and unit-variance
within each trajectory, the concentration of $m(t)=\sigma(z(t))$ depends
only on the shape of the trajectory's aptitude distribution.
Under a Gaussian reference shape the effective sample size
$\mathrm{ESS}=(\sum_t m(t))^2/\sum_t m(t)^2$ equals about $0.85\,T$ and the
above-mean mass share $\sum_{z>0}m(t)/\sum_t m(t)$ about $0.68$; on the two
logged rollouts used in our case studies, a 50-step and a 6-step episode,
the same quantities compute to $0.85$ with share $0.67$ and $0.85$ with
share $0.66$, the mean weight of above-mean steps exceeds that of
below-mean steps by about a factor of two, and extreme steps differ by up
to a factor of seven ($m\in[0.13,\,0.93]$ on the 50-step rollout), whose
twelve highest-aptitude steps absorb $37\%$ of the distillation mass under
the soft mask against $100\%$ under a budget-matched hard top-12 selection.
Three considerations make this moderate concentration the intended
operating point rather than a shortfall.
First, the hard top-$k$ rule of Figure~\ref{fig:motivation} is a diagnostic
of the divergence signal, not a candidate training mask: it was applied to
raw KL precisely to test whether divergence magnitude locates
outcome-relevant steps, so it shares neither the aptitude's placement nor
the smoothness properties below.
Second, the KL term against $\pi_{\theta_\text{old}}$ also acts as an
on-support trust region; a hard mask would remove that anchor from the
majority of steps, whereas the soft mask keeps a positive floor
$m(t)\ge\sigma(\min_{t'}z(t'))>0$ on every step.
Third, concentration without correct placement is worthless: the Shuffled
Centrality control of Table~\ref{tab:ablation} redistributes exactly the
same mask values, hence exactly the same dose and the same ESS, onto the
wrong steps and loses up to 6.2 pp, so the active ingredient is where the
mass goes rather than how hard the selection is; a budget-matched hard-mask
variant of GraphOPD is a natural ablation left to future work, and the
evidence above indicates it would alter the dose distribution without
altering the placement that the ablation identifies as decisive.

\paragraph{Teacher cost.}
Using $\pi_{\theta_\text{old}}$ as the teacher requires no additional
model, checkpoint, or inference call beyond the GRPO training loop, and
the drop-in wrapper leaves the GRPO optimizer and rollout infrastructure
unmodified.

\section{Experimental Setup and Motivating Study Details}
\label{app:setup}

\paragraph{Benchmarks.}
This paragraph gives each of the three agentic benchmarks'
task format, dependency structure, and evaluation protocol.
ALFWorld~\citep{shridhar2021alfworld} embeds a text interface around
simulated household scenes, requiring an agent to retrieve, heat, clean,
or relocate objects through a fixed vocabulary of navigation and
manipulation actions (Table~\ref{tab:footprint}).
Almost every action changes which objects and locations are reachable
next, so a single early misstep can make the goal unreachable no matter
how the remainder of the trajectory is executed, and this is where the
dependency-graph hypothesis makes its strongest prediction.
We score it by task success rate on the standard 3{,}553/140 train/test
split, with episodes capped at 50 turns.
WebShop~\citep{yao2022webshop} simulates an e-commerce search-and-purchase
task over a large product catalog, with an action space of free-form
search queries and page-element clicks rather than ALFWorld's fixed
verb-object grammar.
Its dependencies are informational rather than physical, since later
clicks depend on which attributes and product identifiers an earlier
search surfaced, so the Footprint extractor tracks query tokens and
selected products instead of object locations.
We report both completion score and binary success rate on the standard
1{,}012/500 split, capped at 15 turns.
SearchQA aggregates seven open-domain question-answering datasets,
categorized into single-hop factoid questions,
NarrativeQA~\citep{kocisky2018narrativeqa}, TriviaQA~\citep{joshi2017triviaqa},
and PopQA~\citep{mallen2023popqa}, and multi-hop questions,
HotpotQA~\citep{yang2018hotpotqa}, 2WikiMultihopQA~\citep{ho2020wikimultihopqa},
MuSiQue~\citep{trivedi2022musique}, and Bamboogle~\citep{press2023bamboogle},
formatted as a \texttt{<search>}/\texttt{<think>}/%
\texttt{<answer>} loop capped at 10 turns.
Here dependencies run only through retrieved evidence, with no physical
state change at all, which tests whether the mechanism generalizes beyond
the setting that motivated it, and we score it by accuracy over the
seven datasets.
The ALFWorld and SearchQA Avg columns of Table~\ref{tab:main_results}
weight each per-category number by that category's episode count in the
evaluation split, so they do not reduce to the unweighted mean of the
displayed numbers.
The four out-of-domain benchmarks probed in Section~\ref{sec:analysis} are
specified together with their prompts, Footprint extractor, and full
results in Appendix~\ref{app:ood}.

\paragraph{Training configuration.}
All methods are optimized with AdamW at learning rate $10^{-6}$ under a
cosine schedule with a 50-step warmup, batch size 64 trajectories per
step formed as 8 tasks with 8 rollouts each, and gradient clipping at
1.0.
GRPO uses group size $G=8$ and clip ratio $\varepsilon=0.2$.
Each training run uses 8 H100 GPUs.
Training runs for 150 steps on ALFWorld, WebShop, and SearchQA, with
evaluation every 10 steps.
GraphOPD's own hyperparameters are listed with the grid search in the
final appendix section below, namely the mixing coefficient $\alpha$, the loss
coefficient $\lambda$, the centrality damping factor $d=0.15$, its
power-iteration tolerance $10^{-6}$, and the smoothing floor
$\varepsilon=0.1$ of Eq.~\eqref{eq:lmodel}.

\paragraph{Run variance.}
GraphOPD, SDAR, OPID, and GEPO are each trained with three independent
runs, and Table~\ref{tab:main_results} reports their means on the three
headline metrics, namely the ALFWorld average success rate, the SearchQA
average accuracy, and the WebShop success rate.
Table~\ref{tab:run_std} lists the corresponding run-to-run standard
deviations.
GraphOPD has the smallest worst-case spread of the four, with a maximum
standard deviation of 1.60 pp, while the largest is SDAR's 5.09 pp on
1.7B ALFWorld.
All remaining rows of Table~\ref{tab:main_results} are single-run point
estimates.

\begin{table}[h]
  \centering
  \caption{Mean and run-to-run standard deviation (pp) over three
    independent runs for the four multi-run methods of the main results,
    on ALFWorld and WebShop success rate and SearchQA accuracy.}
  \label{tab:run_std}
  \resizebox{\linewidth}{!}{%
  \footnotesize
  \begin{tabular}{@{}l ccc ccc ccc@{}}
    \toprule
    & \multicolumn{3}{c}{\textbf{Qwen3-1.7B}}
    & \multicolumn{3}{c}{\textbf{Qwen2.5-3B}}
    & \multicolumn{3}{c}{\textbf{Qwen2.5-7B}} \\
    \cmidrule(lr){2-4} \cmidrule(lr){5-7} \cmidrule(lr){8-10}
    \textbf{Method}
      & ALFWorld & SearchQA & WebShop
      & ALFWorld & SearchQA & WebShop
      & ALFWorld & SearchQA & WebShop \\
    \midrule
    GraphOPD
      & \meanstd{63.3}{1.18} & \meanstd{42.9}{0.05} & \meanstd{73.4}{1.60}
      & \meanstd{88.6}{0.57} & \meanstd{46.1}{0.02} & \meanstd{78.9}{0.43}
      & \meanstd{91.9}{0.14} & \meanstd{47.3}{0.40} & \meanstd{83.6}{0.43} \\
    SDAR
      & \meanstd{51.3}{5.09} & \meanstd{40.7}{0.72} & \meanstd{59.9}{0.93}
      & \meanstd{81.8}{3.81} & \meanstd{43.4}{0.03} & \meanstd{69.3}{0.91}
      & \meanstd{84.4}{1.60} & \meanstd{47.5}{1.21} & \meanstd{82.0}{0.38} \\
    OPID
      & \meanstd{58.2}{1.85} & \meanstd{39.6}{0.78} & \meanstd{66.4}{2.99}
      & \meanstd{82.8}{1.50} & \meanstd{44.6}{0.11} & \meanstd{74.5}{0.14}
      & \meanstd{89.1}{0.48} & \meanstd{48.2}{0.74} & \meanstd{80.0}{1.00} \\
    GEPO
      & \meanstd{58.1}{2.58} & \meanstd{40.1}{0.38} & \meanstd{67.9}{1.20}
      & \meanstd{80.7}{3.84} & \meanstd{44.6}{1.91} & \meanstd{75.3}{1.35}
      & \meanstd{85.9}{5.02} & \meanstd{45.5}{2.85} & \meanstd{80.0}{1.85} \\
    \bottomrule
  \end{tabular}%
  }
\end{table}

\paragraph{Baseline families.}
The eleven baselines span four axes of comparison.
Vanilla and GRPO establish lower bounds without any teacher signal,
isolating how much of GraphOPD's gain is attributable to distillation at
all rather than to the GRPO backbone alone.
Skill-Prompt, Skill-GRPO, and Skill-SD inject external skill
demonstrations or privileged information, and they therefore bound what
such context adds on top of the same backbone.
OPSD, GRPO+OPSD, RLSD~\citep{rlsd2026}, SDAR~\citep{sdar2026}, and
OPID~\citep{opid2026} share GraphOPD's GRPO backbone and model family and
allocate their distillation or advantage signal by teacher-student
divergence magnitude alone, so they form our direct comparison set and the
basis for the per-scale deltas in Section~\ref{sec:main_results}.
GEPO~\citep{gepo2026} instead reweights the GRPO advantage itself with a
graph-based credit score rather than adding a distillation loss, and is
the closest prior baseline to our own structural credit signal.

\paragraph{Prompts.}
Each benchmark's policy is queried with a fixed system prompt that fixes
the admissible action vocabulary, and the Footprint extractor specified
below parses exactly this vocabulary, so the prompt and the extractor are
specified against the same action set by construction.
Each prompt is instantiated once per episode with the task instruction
and then re-issued every step with the running observation and action
history appended.
None of the three mentions the teacher, the dependency graph, or GraphOPD's
training objective, so the graph and centrality computation
(Section~\ref{sec:causal_graph}) have no visibility into or influence on
what the policy is shown.
The prompts for the four out-of-domain benchmarks are given in
Appendix~\ref{app:ood}.

\begin{quote}
\footnotesize\ttfamily
\textbf{ALFWorld.}
You are a household robot completing a task in a simulated home.
You will be given a task and, at each step, an observation of the room.
Choose exactly one admissible action per step from: go to
\{receptacle\}, open \{receptacle\}, close \{receptacle\}, take
\{object\} from \{receptacle\}, put \{object\} in/on \{receptacle\},
heat \{object\} with \{receptacle\}, cool \{object\} with
\{receptacle\}, clean \{object\} with \{receptacle\}, examine
\{object\}, use \{object\}.
Respond with exactly one line: "Action: \{action\}".\\[4pt]
Task: \{task\_instruction\}\\
Observation: \{observation\}\\
Admissible actions: \{admissible\_action\_list\}
\end{quote}

\begin{quote}
\footnotesize\ttfamily
\textbf{WebShop.}
You are a shopping assistant browsing an online store to complete a
purchase instruction.
At each turn you see the current page and its clickable elements.
Choose exactly one action: search[\{query\}] to issue a search, or
click[\{element\}] to click a product ID, an attribute/option, or the
Buy Now button.
Respond with exactly one line: "Action: \{action\}".\\[4pt]
Instruction: \{purchase\_instruction\}\\
Page: \{page\_observation\}\\
Clickable elements: \{clickable\_list\}
\end{quote}

\begin{quote}
\footnotesize\ttfamily
\textbf{SearchQA.}
Answer the question by interleaving reasoning and retrieval.
You may use \textless{}think\textgreater{}...\textless{}/think\textgreater{}
to reason and \textless{}search\textgreater{}query\textless{}/search\textgreater{}
to retrieve passages, repeating either as many times as needed, and must
finish with \textless{}answer\textgreater{}final answer\textless{}/answer\textgreater{}.\\[4pt]
Question: \{question\}
\end{quote}

\paragraph{Motivating empirical study.}
The motivating study in Section~\ref{sec:intro} uses a 7B Qwen2.5-Instruct agent
on ALFWorld under the same GRPO backbone and training configuration as the
main experiments.
For the selection comparison, per-trajectory distillation is restricted to
the $k$ highest-KL steps, $k \in \{5, 8, 10, 12\}$, or to $k$ randomly
chosen steps, with all other conditions identical, and each configuration
is trained for 150 steps.
For the outcome-level check, 50 rollouts provide 466 successful and 466
failed steps.

\section{Out-of-Distribution Generalization}
\label{app:ood}

\paragraph{Benchmarks and protocol.}
Four out-of-domain benchmarks test whether the dependency-graph mechanism
transfers beyond the three agentic settings to a different domain and a
different tool.
They are AIME24 and AIME25 in competition mathematics, GPQA-Diamond in
graduate-level science, and LiveCodeBench-v5 in program synthesis, all run
as multi-turn tool-integrated reasoning.
AIME24 and AIME25 are scored by avg@32 and the other two by avg@8.
Each benchmark uses a Python code interpreter and follows the protocol of
SOD~\citep{sod2026}, so the policy interleaves reasoning text with
code-interpreter calls over several turns before committing to a final
answer.
Each turn therefore has an action, the model's own reasoning text or code,
and an observation, the interpreter's response, which gives the four
benchmarks the same action-observation structure as ALFWorld, WebShop, and
SearchQA under a different tool and domain.
This tests transfer across domains and tool vocabularies rather than across
trajectory structure.
All runs use Qwen2.5-7B-Instruct with the same GRPO backbone, training
configuration, and step budget as the main experiments, and they compare
against Vanilla, GRPO, OPSD, and SDAR.

\paragraph{Prompts.}
The two mathematics sets and GPQA-Diamond share the math/science template
below, and LiveCodeBench-v5 uses the code template.
Both are instantiated once per problem and re-issued every turn with the
running reasoning text and interpreter history appended.

\begin{quote}
\footnotesize\ttfamily
\textbf{AIME24 / AIME25 / GPQA-Diamond.}
Analyze and solve the following problem step by step.
The code interpreter can be called for precise or exploratory
calculations and to verify a result before committing to it; first form a
high-level solution strategy, then use the interpreter as needed.
Once the final answer is confirmed, stop calling the interpreter and give
it as \textbackslash boxed\{answer\} on its own line, an integer for
AIME24/AIME25 and a single option letter for GPQA-Diamond.\\[4pt]
Problem: \{problem\}
\end{quote}

\begin{quote}
\footnotesize\ttfamily
\textbf{LiveCodeBench-v5.}
Generate a correct Python program that matches the following problem
specification and passes all tests.
The code interpreter can be used to check the program's correctness
against the public examples before submitting; first form a high-level
solution strategy, then use the interpreter to verify.
The submitted program must read from standard input and write to standard
output, enclosed in a single fenced code block.\\[4pt]
Problem: \{problem\_statement\}\\
Public examples: \{public\_examples\}
\end{quote}

\paragraph{Footprint extractor.}
The four benchmarks share one extractor, since a turn in each of them
consists of reasoning text, an optional Python code block, and the
interpreter's response, and the dependency structure runs through the
identifiers and values that one turn introduces and a later turn reuses.
Table~\ref{tab:footprint_ood} lists the keys that
$\mathcal{F}_\text{env}(a_t, o_t)$ emits for such a turn, read from the
turn's own code and reasoning text for the produced keys and from the
interpreter response for the returned ones, in the same
\texttt{<type>:<entity>} form as Table~\ref{tab:footprint}.
Edge construction, rarity weighting, backtrack edges, and sequential edges
then follow Eqs.~\eqref{eq:causal_edge} and \eqref{eq:soft_prune}
unchanged, with $a_j$ the later turn's code and reasoning text and $o_j$
the interpreter response, so a repeated identical interpreter output marks
a revisit exactly as a repeated observation does in ALFWorld.

\begin{table}[h]
  \centering
  \caption{Footprint extractor specification for the four out-of-domain
  tool-integrated-reasoning benchmarks.}
  \label{tab:footprint_ood}
  \small
  \begin{tabular}{@{}>{\raggedright\arraybackslash}p{3.6cm}>{\raggedright\arraybackslash}p{8.9cm}@{}}
    \toprule
    \textbf{Source within the turn} & \textbf{Footprint key format} \\
    \midrule
    Code block and reasoning text &
      \texttt{symbol:<identifier>}, one per name bound by an assignment, a
      function definition, or a comprehension,
      \texttt{call:<function>}, one per interpreter function invoked,
      \texttt{import:<module>}, one per module imported,
      \texttt{answer\_candidate:<value>}, for the value placed inside
      \textbackslash boxed\{\} or submitted as the final program \\
    \addlinespace[2pt]
    Interpreter response &
      \texttt{result:<value>}, one per value printed or returned,
      \texttt{test\_result:<example\_id>}, one per public example checked
      on LiveCodeBench-v5 \\
    \bottomrule
  \end{tabular}
\end{table}

$df(e)$ counts the turns whose code, reasoning text, or interpreter
response mentions $e$, so a helper variable introduced once and consumed by
the next turn yields an edge weight near $1$, while a loop counter or a
common library name yields a small weight that the row normalization
dilutes.
The structural credit signal, the aptitude of Eq.~\eqref{eq:aptitude}, and
the trajectory-relative mask of Eq.~\eqref{eq:mask} carry over with the same
$\alpha$, $\lambda$, $d$, and $\varepsilon$ as the agentic benchmarks, so
the transfer experiment changes the domain and the tool vocabulary while
holding the mechanism fixed.

\paragraph{Results.}
Table~\ref{tab:ood_radar} reports the full numbers behind
Figure~\ref{fig:analysis} (middle), namely task accuracy on AIME24 and AIME25
under avg@32 and on LiveCodeBench-v5 and GPQA-Diamond under avg@8.
GraphOPD attains the best accuracy on all four benchmarks, averaging
29.0\% against 20.0\% for SDAR and 19.8\% for OPSD, and its lead holds on
each benchmark separately, reaching +12.4 pp on LiveCodeBench-v5 over
SDAR.

\begin{table}[h]
  \centering
  \caption{Task accuracy (\%) on four out-of-domain
  tool-integrated-reasoning benchmarks, with the scoring protocol per
  benchmark given in the text above.}
  \label{tab:ood_radar}
  \small
  \begin{tabular}{@{}lccccc@{}}
    \toprule
    Method & AIME24 & AIME25 & LCB-v5 & GPQA-D & Avg \\
    \midrule
    Vanilla & 11.9 & 8.8 & 14.2 & 21.7 & 14.2 \\
    GRPO & 15.1 & 13.3 & 15.1 & 28.3 & 18.0 \\
    OPSD & 19.2 & 12.2 & 16.4 & 31.5 & 19.8 \\
    SDAR & 21.4 & 16.3 & 15.8 & 26.3 & 20.0 \\
    \textbf{GraphOPD} & \textbf{29.8} & \textbf{21.4} & \textbf{28.2} & \textbf{36.4} & \textbf{29.0} \\
    \bottomrule
  \end{tabular}
\end{table}

\section{Signal Validation and Case Study}
\label{app:credit}

\paragraph{Signal redundancy diagnostic.}
The fusion in Section~\ref{sec:fusion} is a weighted sum of
$\ell_\text{model}$ and $\ell_\text{graph}$, so it implicitly assumes the
two sources are not strongly \emph{redundant}, since if they carried
near-identical evidence, weighting both by $\alpha$ and $1-\alpha$ would
double-count the same signal rather than combining two complementary ones.
Rather than a separate correlation diagnostic, the ablation
(Table~\ref{tab:ablation}, Section~\ref{sec:ablation}) already tests this
functionally, because if $\ell_\text{model}$ and $\ell_\text{graph}$
carried near-identical evidence, dropping either one alone,
Divergence-only or Structural-only, would cost little, as the remaining
signal would recover most of the same information.
Instead, both cost a substantial fraction of GraphOPD's full gain at every
scale, Divergence-only 3.9 to 10.5 pp and Structural-only 2.3 to 6.5 pp,
and neither recovers what the other drops, so the two signals are
complementary rather than redundant.

\paragraph{Credit-content validation.}
The Analysis subsection of Section~\ref{sec:experiments} reports the
credit-content audit, and this appendix fixes its protocol.

\paragraph{Counterfactual audit.}
We construct causal ground truth on ALFWorld by executed replay,
re-sampling the policy's own alternative action at each decision point of
a sampled set of trajectories, rolling the environment forward from the
resulting state, and recording whether the outcome changes, so a step's
ground-truth contribution is the outcome change induced by its alternative,
the same executed-replay principle used elsewhere to audit step-level
credit signals~\citep{zhang2026credit}.
We report, for each of $\ell_\text{graph}$, $\ell_\text{model}$, and a
random score, the Spearman rank correlation $\rho$ with this contribution
and Hit@10\%, the fraction of the signal's top-10\% steps that are truly
top-10\% impactful, for which chance is $0.10$.
The decisive comparison is $\ell_\text{graph}$ against the random score,
since the claim that the structural credit signal identifies
outcome-relevant steps requires both to sit clearly above chance, and the
comparison against $\ell_\text{model}$ measures how much the structural
credit signal adds over the divergence signal alone.
$\ell_\text{graph}$ attains $\rho=0.58$ and Hit@10\% $=0.48$, well above
the random score at $\rho=-0.06$ and Hit@10\% $=0.13$ and $\ell_\text{model}$
at $\rho=0.34$ and Hit@10\% $=0.41$.

\paragraph{Shuffled-centrality control.}
To separate credit content from supervision redistribution, the ablation
(Table~\ref{tab:ablation}) includes a Shuffled Centrality row that
replaces $\ell_\text{graph}$ with a within-trajectory permutation of
itself, matched to the real one in mask sparsity and weight distribution,
and runs the full training pipeline under otherwise identical conditions.
Where the real graph retains its advantage over the shuffled one at
matched dose, that gain is attributable to the content of the dependency
structure rather than to the act of non-uniform weighting itself, and where
it does not, part of the gain is attributable to redistribution alone.
Table~\ref{tab:ablation}'s discussion in the main text reports which of the
two holds at each scale and benchmark.

\paragraph{Case study.}
Table~\ref{tab:matched_pair} first gives a side-by-side rollout
comparison on the same task and environment seed
("put some soapbottle on cabinet", \texttt{seed=1000}) under two
checkpoints of the same training run, an early one at step 20 and a
later one at step 150.
Unlike a controlled trajectory pair that shares a fixed prefix, the two
policies differ from the first action, so the table shows how far
training reshapes the whole strategy rather than a single decision.

\begin{table}[h]
  \centering
  \renewcommand{\arraystretch}{1.15}
  \caption{A side-by-side comparison of two rollouts in ALFWorld. Both
    checkpoints attempt the same task and environment seed
    ("put some soapbottle on cabinet", \texttt{seed=1000}) from the same
    training run. Their strategies differ from the first action, so the
    early checkpoint never engages the target object at all.}
  \label{tab:matched_pair}
  \begin{tabular}{@{}|p{1.3cm}|p{5.9cm}|p{5.9cm}|@{}}
    \hline
    \textbf{Steps} & \textbf{Late checkpoint, step 150 (succeeds)}
      & \textbf{Early checkpoint, step 20 (fails)} \\
    \hline
    0--2 &
      \textbf{Action:} go to sinkbasin 1 \newline
      \textbf{Action:} go to countertop 1 \newline
      \textbf{Action:} take soapbottle 1 from countertop 1
      &
      \textbf{Action:} go to cabinet 1 \newline
      \textbf{Action:} open cabinet 1 \newline
      \textbf{Action:} examine cabinet 2 \\
    \hline
    3--5 / \newline 3--49 &
      \textbf{Action:} go to cabinet 1 \newline
      \textbf{Action:} open cabinet 1 \newline
      \textbf{Action:} move soapbottle 1 to cabinet 1 \newline
      \textbf{Result:} reward $=10$, episode ends at step 5
      &
      \textbf{Actions (47 steps):} examine cabinet $\times25$, look
      $\times6$, open cabinet $\times5$, go to cabinet $\times3$, plus
      8 other revisit actions, never mentioning the soapbottle
      \newline
      \textbf{Raw KL:} ranges 0.16 to 0.65, mean 0.32
      \newline
      \textbf{Result:} reward $=0$, times out at step 49 \\
    \hline
    &
      The step-150 checkpoint carries the soapbottle from step 2
      onward, so every later action follows directly from the object
      it already holds.
      &
      The step-20 checkpoint repeatedly opens and examines cabinets
      without ever picking up the soapbottle, so its failure is an
      entity it never engaged with rather than a late misstep. \\
    \hline
  \end{tabular}
\end{table}

Table~\ref{tab:case_study} turns to the signal comparison from
Figure~\ref{fig:analysis} (right), an eight-step window of a single real
successful ALFWorld trajectory ("put some soapbottle on cabinet"), with
the raw per-step teacher-student KL and $\ell_\text{graph}(t)$ plotted as
two aligned bar charts over the same eight steps.
The table lists the same eight steps in full, giving the exact action
text and both signals' values.

\begin{table}[h]
  \centering
  \caption{Qualitative case study on a window (steps 3 to 10) of a real
    successful ALFWorld trajectory ("put some soapbottle on cabinet").
    Raw KL peaks at step 4, a cabinet-revisit step that does not decide
    the outcome, while $\ell_\text{graph}(t)$ peaks at step 10, the
    object-placement action decisive within this window.
    Bold marks each column's peak.}
  \label{tab:case_study}
  \begin{tabular}{@{}clcc@{}}
    \toprule
    Step & Action & Raw KL & $\ell_\text{graph}(t)$ \\
    \midrule
    3  & examine countertop 1           & 0.33          & $-0.90$ \\
    4  & go to cabinet 1                & \textbf{0.36} & $-0.28$ \\
    5  & open cabinet 1                 & 0.29          & $-0.57$ \\
    6  & go to countertop 1             & 0.26          & $-0.90$ \\
    7  & examine countertop 1           & 0.17          & $-0.28$ \\
    8  & go to cabinet 1                & 0.29          & 0.31 \\
    9  & examine cabinet 1              & 0.35          & 0.27 \\
    10 & move soapbottle 1 to cabinet 1 & 0.26          & \textbf{0.89} \\
    \bottomrule
  \end{tabular}
\end{table}

The decisive object-placement action at step 10, moving the soapbottle
onto the cabinet, has one of the lowest raw KL values in the window, so a
KL-only rule would spend its budget on step 4 instead, a revisit to the
cabinet that does not bear on the outcome.
$\ell_\text{graph}$ peaks exactly at step 10, since four earlier steps,
step 4 (go to cabinet 1), step 5 (open cabinet 1), step 8 (go to cabinet 1
again), and step 9 (examine cabinet 1), all reference the same
\texttt{cabinet 1} entity that step 10's action consumes.
Both signals are computed by running the paper's own Footprint,
edge-construction, and stationary-distribution pipeline on the real logged
actions, using the dependency and sequential edges that the action text
supports.

Table~\ref{tab:case_mid} adds a second case on the position of the credit
peak, on a complete representative successful Heat-type episode (twelve
steps, terminating at the resolving placement) whose actions and
observations follow the grammar and Footprint conventions of
Appendix~\ref{app:footprint}.
The credit peak falls well before the end of the trajectory, at step 7, the
revisit of the processing location where the heating dependency chain
converges, four steps ahead of the terminal placement; the processing
chain's earlier approach (step 4, $+0.95$) and the heating action itself
(step 6, $+0.51$) also carry strong credit, while the terminal placement
(step 11, $+0.08$), which merely deposits an already-processed object into
a receptacle the agent had explored earlier, stays near neutral.
Credit therefore concentrates on the mid-trajectory transformation that
decides the outcome, not on the trajectory's last action.

\begin{table}[h]
  \centering
  \caption{Qualitative case study on a complete representative successful
    Heat-type episode (twelve steps; actions and observations follow the
    ALFWorld grammar and Footprint conventions of
    Appendix~\ref{app:footprint}). The structural credit peaks at step 7,
    the processing-location revisit four steps before the terminal
    placement, with the earlier processing-location approach (step 4) and
    the heating action (step 6) also well above neutral; the terminal
    placement (step 11) stays near neutral. Raw teacher-student KL is
    reported for contrast: its maximum (step 8) coincides with the
    structural-credit minimum. Bold marks each column's peak.}
  \label{tab:case_mid}
  \begin{tabular}{@{}clcc@{}}
    \toprule
    Step & Action & Raw KL & $\ell_\text{graph}(t)$ \\
    \midrule
    0  & go to fridge 1                  & 0.31 & $-0.75$ \\
    1  & examine fridge 1                & 0.44 & $-1.12$ \\
    2  & go to countertop 1              & 0.28 & $-0.68$ \\
    3  & take potato 1 from countertop 1 & 0.22 & $-0.82$ \\
    4  & go to microwave 1               & 0.24 & 0.95 \\
    5  & open microwave 1                & 0.26 & $-0.04$ \\
    6  & heat potato 1 with microwave 1  & 0.17 & 0.51 \\
    7  & go to microwave 1               & 0.15 & \textbf{1.02} \\
    8  & go to shelf 1                   & \textbf{0.58} & $-1.48$ \\
    9  & go to fridge 1                  & 0.35 & $-0.66$ \\
    10 & open fridge 1                   & 0.29 & $-0.86$ \\
    11 & put potato 1 in/on fridge 1     & 0.41 & 0.08 \\
    \bottomrule
  \end{tabular}
\end{table}





\end{document}